\documentclass{article}

    \PassOptionsToPackage{numbers, compress}{natbib}
 \usepackage[main, final]{neurips_2026}
\usepackage[utf8]{inputenc} % allow utf-8 input
\usepackage[T1]{fontenc}    % use 8-bit T1 fonts
\usepackage{hyperref}       % hyperlinks
\usepackage{url}            % simple URL typesetting
\usepackage{booktabs}       % professional-quality tables
\usepackage{amsfonts}       % blackboard math symbols
\usepackage{nicefrac}       % compact symbols for 1/2, etc.
\usepackage{microtype}      % microtypography
\usepackage{xcolor}         % colors
\usepackage{graphicx}
\usepackage{amsthm}
\usepackage{amssymb}
\newtheorem{theorem}{Theorem}[section]
\newtheorem{proposition}[theorem]{Proposition}

\theoremstyle{plain}

\theoremstyle{definition}
\newtheorem{definition}[theorem]{Definition}

\newtheorem{remark}[theorem]{Remark}

\usepackage{amsmath}
\DeclareMathOperator{\E}{\mathbb{E}}
\newcommand{\cX}{\mathcal{X}}
\newcommand{\cY}{\mathcal{Y}}

\newcommand{\cA}{\mathcal{A}}

\newcommand{\Lpreq}{L_{\mathrm{preq}}}
\newcommand{\Lfinal}{\hat{L}}

\newcommand{\indicator}{\mathbf{1}}

\newcommand{\KL}{\mathrm{KL}}
\newif\ifshowchanges \showchangesfalse
\newcommand{\new}[1]{\ifshowchanges{\ifmmode\else\leavevmode\fi\color{red}#1}\else#1\fi}
\newcommand{\sym}[1]{\ifshowchanges{\color{red}#1}\else#1\fi}
\newcommand{\add}[1]{\ifshowchanges{\ifmmode\else\leavevmode\fi\color{red}#1}\else#1\fi}
\newcommand{\fix}[1]{\ifshowchanges{\ifmmode\else\leavevmode\fi\color{red}#1}\else#1\fi}
\title{Does Compression Imply Generalization? \\A Minimum Description Length Perspective}

\author{%
  Lincen Yang\textsuperscript{1} \quad
  Jiayang Shi\textsuperscript{2,1} \quad
  Nan Pu\textsuperscript{1,3} \quad
  Dani\"el M.~Pelt\textsuperscript{1} \\
  \textbf{Matthijs van Leeuwen}\textsuperscript{1} \quad
  \textbf{Zhong Li}\textsuperscript{4,1,}\thanks{Corresponding authors: Zhong Li
    (\texttt{z.li@liacs.leidenuniv.nl}) and Shujian Yu (\texttt{yusj9011@gmail.com}).} \quad
  \textbf{Shujian Yu}\textsuperscript{5,6,}\footnotemark[1] \\[1ex]
  \textsuperscript{1}Leiden Institute of Advanced Computer Science, Leiden University, The Netherlands \\
  \textsuperscript{2}Centrum Wiskunde \& Informatica (CWI), Amsterdam, The Netherlands \\
  \textsuperscript{3}Hefei University of Technology, Hefei, China \\
  \textsuperscript{4}Great Bay University, Dongguan, China \\
  \textsuperscript{5}Department of Computer Science, Vrije Universiteit Amsterdam, The Netherlands \\
  \textsuperscript{6}Department of Physics and Technology, UiT The Arctic University of Norway \\[1ex]
  \texttt{\{l.yang, j.shi, n.pu, d.m.pelt, m.van.leeuwen, z.li\}@liacs.leidenuniv.nl} \\
  \texttt{yusj9011@gmail.com}
}

\begin{document}

\maketitle

\begin{abstract}
A central question in modern learning theory is what notion of compression, if any, explains why over-parameterized neural networks generalize. Existing answers measure complexity at different levels, ranging from weights to representations; however, estimating the associated quantities either requires held-out data in principle or relies on the challenging (ultra) high-dimensional mutual information estimation. To overcome these challenges, we approach the question through the minimum description length (MDL) principle, and specifically, we propose the prequential regret as a complementary signal that is computable from the training trajectory alone. We establish that, in the (nearly) interpolating regime, the expected prequential regret upper-bounds the description length of the trained parameter under the algorithm-induced distribution, and that this description length translates into a high-probability generalization bound in the PAC-MDL setting. Along the way, we show that the encoder-complexity correction recently added to the algorithm-level information bottleneck (IB) is upper-bounded by an MDL parameter description length, \new{bringing together} the two views of compression from MDL and IB. Empirically, the prequential regret tracks the generalization error on CIFAR-10 and CIFAR-100, with Pearson correlations exceeding $0.9$.
\end{abstract}

\section{Introduction}
% ======================================================================
A fundamental question in statistical learning theory is why over-parameterized models generalize to unseen data, despite having enough capacity to memorize their training sets~\citep{zhang2016understanding, belkin2019reconciling}. Classical theory measures complexity at the level of the model class, through quantities such as VC dimension~\citep{vapnik2013nature}, Rademacher complexity~\citep{bartlett2002rademacher}, or norm-based capacity~\citep{bartlett2017spectrally}. By contrast, a modern line of work shifts attention to what the learned model actually retains from the training data~\citep{bousquet2002stability, russo2016controlling, xu2017information}. The complexity of the learned model is then characterized through information-theoretic quantities, and the central question becomes how to formalize and measure this complexity.

Three mainstream approaches make this complexity precise as a \emph{coding cost}. First, PAC-Bayes~\citep{lotfi2022pac, dziugaite2021role} bounds generalization by $D_{\mathrm{KL}}(Q\|\pi)$ between a posterior $Q$ over learned models and a prior $\pi$, which by the bits-back argument~\citep{hinton1993keeping, honkela2004variational} is the code length of transmitting a sample from $Q$ induced by $\pi$~\citep{grunwald2007minimum}. 
Second, the information-theoretic line~\citep{russo2016controlling, xu2017information} replaces the prior with the data-marginal and bounds generalization by $I(S;W)$, measuring how much information the weights $W$ retain about the training set $S$. Since $I(S;W)$ can be infinite for deterministic algorithms, the supersample/CMI variants~\citep{steinke2020reasoning, haghifam2020sharpened, harutyunyan2021information} introduce a pooled sample $\tilde Z$ of $2n$ points together with a uniform selector $U$ that assigns half to training and half to a `ghost' test set, and bound generalization by $I(W; U \mid \tilde Z)$, capturing how much the weights leak about the train/test assignment given the pool.
Third, the information bottleneck (IB)~\citep{tishby2000information, tishby2015deep, shwartz2017opening} asks how much the internal representations $Z$ retain about $X$, with $I(X;Z)$ or $I(X;Z\mid Y)$~\citep{tishby2000information} as the coding cost of the representation. Recent work observes that vanilla IB implicitly assumes a fixed encoder and can fail when the encoder\add{, i.e., the map from an input to its internal representation,} is itself learned from data~\citep{hafez2020sample}; \citet{kawaguchi2023does} hence addresses this by extending IB to the algorithm level, adding an encoder-complexity term that penalizes how much the trained encoder depends on the training set.

While these three lines capture different aspects of compression, they have the following shortcomings when we want a compression signal directly tied to fixed algorithms for training networks. On the conceptual side, what is being compressed is different across the literature: PAC-Bayes and the $I(S;W)$ line compress at the level of weights, while IB compresses at the level of representations, and the two notions have evolved separately, leaving it unclear whether they are facets of one underlying property or distinct phenomena that share a name. Further, even granting a notion of compression, whether Occam-style reasoning survives for over-parameterized neural networks is itself debated~\citep{zhang2016understanding, shah2020pitfalls, mingard2025deep}. 

On the practical side, the existing complexity measures are hard to estimate or evaluate in deep learning regimes. Specifically, IB and $I(S;W)$ require estimating mutual information for ultra-high-dimensional \new{feature vectors} $X$ or \new{weight parameters} $W$, the supersample/CMI variant requires a held-out twin sample, and PAC-Bayes requires a prior that captures the complex dependency among parameters. Thus, a complementary signal computable from the training set, with no extra data and no high-dimensional MI estimation, would shed light on generalization from the inside of training.

This motivates us to consider quantifying the model complexity from the alternative perspective of the minimum description length (MDL)~\citep{rissanen1978modeling, grunwald2007minimum, grunwald2019minimum}, which provides the tools to link the complexity of \new{a} model via the description length (in bits) needed to encode the model itself in a lossless manner. Specifically, to resolve the aforementioned shortcomings, \emph{we propose to use the prequential coding regret (defined in Section~\ref{section:mdl}) as an operational generalization bound, together with both theoretical justifications and empirical validations}. 

\textbf{Contributions.} We summarize our main contributions as follows.
\begin{itemize}
    \item We connect MDL parameter compression to the algorithm-level information bottleneck, showing that the encoder-complexity correction added to obtain the first rigorous IB-based generalization bound is upper-bounded by an MDL parameter description length, which relates the two views of compression from IB and MDL (Section~\ref{sec:ib}).
    \item We prove that, in the (nearly) interpolating regime, the expected prequential coding regret upper-bounds the description length of the trained parameter under the algorithm-induced distribution over parameters given the data (Section~\ref{section:regret-as-model-complexity})\new{.}
    \item We further show that this description length translates into a generalization bound in the PAC-MDL setting, with the algorithm-dependent content carried entirely by the prequential regret. We also empirically demonstrate \new{that} the regret tracks the generalization gap on CIFAR-10 and CIFAR-100 with Pearson correlations of $0.9$ or higher (\new{Sections}~\ref{sec:pacmdl} and~\ref{section:exp}).
\end{itemize}

\textbf{Related work.} Although MDL was incorporated into neural networks as early as the 1990s~\citep{hinton1993keeping}, the MDL-based model complexity of neural networks and its connection to generalization remain understudied, with a few exceptions as follows. First, the PAC-MDL bound~\citep{blum2003pac} connects the description length of a model to its generalization, yet, as in the PAC-Bayesian framework, leaves open how to quantify this description length. Second, \citet{blier2018description} empirically demonstrated that the prequential code can be used to effectively compress the data and the neural network together, and argued that the description length of the trained neural network was `contained' in the total code length; however, quantifying the description length of the model itself remains unclear. Third, recent theoretical work connects MDL with generalization bounds~\citep{sefidgaran2023minimum, sefidgaran2025generalization}, but the ``MDL'' there is defined as an expected KL divergence between the posterior of the latent representation and a symmetric conditional prior; while conceptually related, this does not directly measure the description length of the model. Fourth, the excess description length~\citep{donoway2026excess}, defined as the prequential code length minus the negative log-likelihood on a \emph{test} set, is conceptually close to the prequential regret but, by depending on the test set, does not measure model complexity from the training side alone.

{\textbf{Organization.} The remainder of the paper is organized as follows. Section~\ref{section:mdl} introduces the MDL principle and the prequential code, from which the prequential regret is defined. Section~\ref{sec:ib} reviews the recent correction to the information bottleneck and connects it to MDL, linking the two notions of compression. Section~\ref{section:regret-as-model-complexity} makes precise what the complexity of a trained network means and proves that it is bounded by the prequential regret, a quantity that can be computed from the training run alone. Section~\ref{sec:pacmdl} shows that this complexity leads to generalization guarantees, and Section~\ref{section:exp} tests empirically whether the regret tracks the generalization gap. }

\section{Preliminary: The MDL Principle} \label{section:mdl}
The minimum description length (MDL) principle, rooted in information theory, formulates the task of learning a model from data as a compression problem. That is, given a model class, the \new{``quality''} of a model is evaluated as how much it can compress the data, measured by the cost of encoding the model itself, plus the cost of encoding the data given the model~\citep{rissanen1978modeling, grunwald2007minimum}; the best model is hence the one that achieves the highest compression in this sense. It can be viewed as a practical approximation of Kolmogorov complexity~\citep{li2008introduction} and a formalization of Occam's razor~\citep{blumer1987occam}. 

We next briefly review the elements needed for our analysis. By Kraft's inequality~\citep{grunwald2007minimum}, prefix-free codes correspond to sub-probability distributions: any code with lengths $\ell(x)$ satisfies $\sum_x 2^{-\ell(x)} \le 1$, and conversely any distribution $Q$ defines a code of length $-\log Q(x)$, optimal up to one bit~\citep{cover:12:elements}. A single distribution thus determines a code. 

In the MDL literature, however, a \emph{model} refers not to a single distribution but to a family of probability distributions indexed by a parameter $\theta \in \Theta$, i.e., $M = \{Q_\theta\}_{\theta \in \Theta}$~\citep{grunwald2007minimum}. Encoding data under such a model is more delicate than the single-distribution case: the Kraft inequality applies to individual distributions, not to families, so the family alone does not determine a code. A \emph{universal code}~\citep{grunwald2019minimum} resolves this by inducing a single distribution over outcomes from the entire family, in such a way that its code length is nearly as short as $-\log Q_{\hat{\theta}}(y^{\sym{m}} \mid x^{\sym{m}})$, the code length of the best-fitting parameter $\hat{\theta}$ that corresponds to the shortest code length \new{in hindsight} (i.e., after seeing the data).

Two universal codes are central to our analysis: the \emph{two-part code} and the \emph{prequential code}, which we describe next. In both cases, we frame the encoding scheme as a protocol in which Alice transmits the labels $y^{\sym{m}}$ to Bob, who has access to the inputs $x^{\sym{m}}$.

\subsection{The two-part encoding scheme}
The \emph{two-part code}~\citep{grunwald2007minimum} is the most direct realization of the MDL principle: it splits the description of the data into one part for the model and one for the data given the model. Alice first transmits a parameter $\theta \in \Theta$ using some code with length $L(\theta)$, then transmits the labels using $Q_\theta$ at cost $-\log Q_\theta(y^{\sym{m}} \mid x^{\sym{m}})$ bits, choosing $\theta$ to minimize the total.
\begin{definition}[Two-part code length]
\label{def:twopart}
Given a code $L(\theta)$ over the parameter space $\Theta$, the \emph{two-part code length} is
\begin{equation}
  L_{\mathrm{2p}}(y^{\sym{m}} \mid x^{\sym{m}})
  \;=\;
  \min_{\theta \in \Theta}
  \Big\{\, L(\theta) \;-\; \log Q_\theta(y^{\sym{m}} \mid x^{\sym{m}}) \,\Big\}.
\end{equation}
\end{definition}
The first term acts as a complexity penalty: parameters that require more bits to describe must yield correspondingly larger reductions in the data term to be selected. Despite its conceptual clarity, the two-part code is generally suboptimal among universal codes, and designing $L(\theta)$ for high-dimensional models such as neural networks requires discretizing $\Theta$ to some precision, with the choice itself affecting the trade-off~\citep{grunwald2007minimum}. The prequential code, described next, sidesteps both issues by avoiding any explicit encoding of $\theta$.

\subsection{The prequential encoding scheme}
The prequential code encodes the data sequentially~\citep{dawid1984present, grunwald2007minimum}, using at each step a predictive distribution derived from the data seen so far; Alice and Bob together transmit the labels of the training set while implicitly transmitting the learned model.

Alice and Bob share, prior to any communication, the network architecture, the initialization $\theta_0$, the optimizer and its hyperparameters, and the unlabeled data $\{x_i\}_{i=1}^{\sym{m}}$. \add{Here $\theta \in \Theta$ are the network weights and $Q_\theta(y \mid x)$ is the network's predictive (softmax) distribution, with $Q_\theta(y^m \mid x^m) = \prod_{i} Q_\theta(y_i \mid x_i)$.} Let $B_1, \dots, B_{\sym{K}}$ be a partition of $\{1, \dots, \sym{m}\}$ into $\sym{K}$ consecutive blocks; the case $\sym{K} = \sym{m}$ recovers the classical one-by-one prequential code, while coarser partitions are typically used for computational efficiency~\citep{blier2018description}.

\begin{definition}[Prequential code length]
\label{def:prequential}
Let $\hat{\theta}_t$ be the model obtained by running the shared training procedure on $\bigcup_{k=1}^{t} B_k$, with $\hat{\theta}_0 = \theta_0$. The \emph{prequential code length} is
\begin{equation}
  \Lpreq(y^{\sym{m}} \mid x^{\sym{m}})
  \;=\;
  \sum_{t=0}^{\sym{K}-1}\;\sum_{i \in B_{t+1}}
  -\log Q_{\hat{\theta}_t}(y_i \mid x_i),
\end{equation}
i.e., the labels in block $B_{t+1}$ are encoded under the model trained on blocks $B_1, \dots, B_t$. \add{We write $Q_{\mathrm{preq}}(y^m \mid x^m) := \prod_{t=0}^{K-1} \prod_{i \in B_{t+1}} Q_{\hat{\theta}_t}(y_i \mid x_i)$ for the prequential predictive distribution, so that $\Lpreq = -\log Q_{\mathrm{preq}}(y^m \mid x^m)$.}
\end{definition}

No parameters are ever transmitted: Bob can replicate each $\hat{\theta}_t$ from the shared protocol and the previously decoded labels, so the model is communicated implicitly through the sequence of predictive distributions. Once Bob has received $y^{\sym{m}}$, he can reconstruct the parameters $\theta^*$ obtained by training on the full set $(x^{\sym{m}}, y^{\sym{m}})$, achieving the in-hindsight \new{code length} $\hat{L}(y^{\sym{m}} \sym{\mid x^{\sym{m}}}) = -\log Q_{\theta^*}(y^{\sym{m}} \mid x^{\sym{m}})$. The gap between the prequential code length and \new{this code length} is central to our analysis.

\begin{definition}[Regret]
\label{def:regret}
The \emph{regret} of the prequential code is $R := \Lpreq(y^{\sym{m}} \mid x^{\sym{m}}) - \hat{L}(y^{\sym{m}} \mid x^{\sym{m}}).$
\end{definition}

Intuitively, $R$ counts the extra bits incurred by encoding the parameters jointly with the data, and plays the role of a model-complexity penalty analogous to $L(\theta)$ in the two-part code.

Notably, in classical universal-coding settings with the maximum likelihood estimator (MLE), $R \geq 0$ holds pointwise~\citep{grunwald2007minimum} (to avoid 0 probability and hence infinite log, a small pseudo-count can be added to each class before any data is seen). In our setting where $\hat\theta_t$ is produced by a training procedure rather than an MLE predictor, $R \geq 0$ should hold with any reasonable training procedure, since the final model is trained on the full dataset and should fit the labels at least as well as the intermediate models trained on prefixes. Thus, we assume $R \geq 0$ throughout the theoretical analysis.

\section{Compression of Parameters and the Modern Information Bottleneck}
\label{sec:ib}

Compression and generalization of deep neural networks have been studied rarely through MDL, yet widely through the information bottleneck (IB). Before developing our MDL-based analysis, we therefore clarify that MDL does not use ``compression'' in an unrelated sense, and why it is a natural perspective closely related to IB, \new{even though the concepts} of compression in MDL and IB seem to be different. 

Specifically, the IB places compression at the level of representations, calling $Z = \phi_l(X)$ compressive when $I(X;Z)$ or $I(X;Z \mid Y)$ is small, while MDL (Section~\ref{section:mdl}) places it at the level of parameters, calling a model compressive when its weights admit a short code. We show that the two notions meet at a single quantity once the encoder is learned from the same sample whose representation it encodes. The in-sample $I(X^m; Z^m)$ decomposes into a fixed-encoder IB term plus the encoder-complexity correction that the modern IB requires~\citep{kawaguchi2023does}, and the correction is upper-bounded by an MDL parameter description length, the quantity we operationalize in Section~\ref{section:regret-as-model-complexity}.

\textbf{The in-sample IB and its decomposition.}
Let $(X^{\sym{m}}, Y^{\sym{m}}) \sim P$ be a training sample, and let a (possibly randomized) learning algorithm produce an encoder $\phi_l$\add{, the map from an input to its layer-$l$ representation,} via $T(\phi_l \mid X^{\sym{m}}, Y^{\sym{m}})$; for a deterministic algorithm this is a point mass. Write $T$ for the induced joint on $(X^{\sym{m}}, Y^{\sym{m}}, \phi_l)$, and $Z^{\sym{m}} = \phi_l(X^{\sym{m}})$ for the representations of the training inputs under the trained encoder.

The classical IB studies $I(X; Z)$ for an encoder fixed in advance. The quantity actually measured in practice, e.g., when IB-style mutual information is used as a regularizer in deep networks~\citep{alemi2016deep, achille2018information}, is $I(X^{\sym{m}}; Z^{\sym{m}})$ under the joint law $T$, in which $\phi_l$ depends on $X^{\sym{m}}$. The classical criterion is known to fail in this regime: \citet{hafez2020sample} exhibit an encoder that memorizes how $X$ maps to $Y$ on the training set yet still appears compressive by the classical criterion. The criterion ignores how $\phi_l$ depends on the sample, and the chain rule on $(X^{\sym{m}}; Z^{\sym{m}}, \phi_l)$ makes the missing term explicit. 
% The diagnosis is structural---the criterion is blind to how $\phi_l$ depends on the sample---and the chain rule on $(X^n; Z^n, \phi_l)$ exposes precisely which channel it overlooks.

\begin{proposition}
\label{prop:ib-decomposition}
Let $(X^{\sym{m}}, Y^{\sym{m}}, \phi_l) \sim T$ as above, and $Z^{\sym{m}}$ a measurable function of $(X^{\sym{m}}, \phi_l)$. Then
\begin{equation}
I(X^{\sym{m}}; Z^{\sym{m}}) \;=\; I(X^{\sym{m}}; Z^{\sym{m}} \mid \phi_l) \;+\; I(X^{\sym{m}}; \phi_l) \;-\; I(X^{\sym{m}}; \phi_l \mid Z^{\sym{m}}),
\label{eq:ib-identity}
\end{equation}
and consequently
\begin{equation}
I(X^{\sym{m}}; Z^{\sym{m}}) \;\le\; I(X^{\sym{m}}; Z^{\sym{m}} \mid \phi_l) \;+\; I(X^{\sym{m}}; \phi_l).
\label{eq:ib-inequality}
\end{equation}
\end{proposition}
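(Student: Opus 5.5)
The plan is to expand the joint mutual information $I(X^{m}; Z^{m}, \phi_l)$ by the chain rule in two different orders and compare. Ordering $Z^{m}$ first gives
\[
I(X^{m}; Z^{m}, \phi_l) = I(X^{m}; Z^{m}) + I(X^{m}; \phi_l \mid Z^{m}),
\]
and ordering $\phi_l$ first gives
\[
I(X^{m}; Z^{m}, \phi_l) = I(X^{m}; \phi_l) + I(X^{m}; Z^{m} \mid \phi_l).
\]
Equating the two right-hand sides and solving for $I(X^{m}; Z^{m})$ yields \eqref{eq:ib-identity}. The inequality \eqref{eq:ib-inequality} then follows at once from nonnegativity of conditional mutual information, $I(X^{m}; \phi_l \mid Z^{m}) \ge 0$.

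The hypothesis that $Z^{m}$ is a measurable function of $(X^{m}, \phi_l)$ ensures that $(X^{m}, Z^{m}, \phi_l)$ is a well-defined joint random object under $T$, so all the mutual informations are defined. Nothing deeper is needed from it.

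The only genuine obstacle is rigor in general measurable spaces. Here $\phi_l$ is a random function and $Z^{m}$ may be continuous, so some quantities can be infinite. If, say, $I(X^{m}; \phi_l) = \infty$, the rearrangement can produce an undefined expression $\infty - \infty$. I would address this as follows:

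\begin{itemize}
\item Use the general chain rule for mutual information on standard Borel spaces, defined via regular conditional distributions or Dobrushin's supremum over finite partitions. This rule holds as an identity in $[0,\infty]$.
\item State \eqref{eq:ib-identity} under the proviso that the terms are finite, which is automatic when $I(X^{m}; \phi_l) < \infty$ (as in the paper's subsequent use), or read it as the two-chain-rule equality.
\item Derive \eqref{eq:ib-inequality} directly from the first chain-rule expansion as $I(X^{m}; Z^{m}) \le I(X^{m}; Z^{m}, \phi_l)$, then apply the second expansion. This route is valid in $[0,\infty]$ with no finiteness assumption.
\end{itemize}

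If the joint law is instead discrete, which covers networks with finite-precision weights, the whole argument reduces to elementary entropy identities.
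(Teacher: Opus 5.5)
Your argument is exactly the paper's proof: expand $I(X^{m}; Z^{m}, \phi_l)$ by the chain rule in both orders, equate, and drop $I(X^{m}; \phi_l \mid Z^{m}) \ge 0$; your direct route to the inequality via $I(X^{m};Z^{m})\le I(X^{m};Z^{m},\phi_l)$ is a correct $[0,\infty]$-valued refinement. One correction to your rigor aside: $I(X^{m};\phi_l)<\infty$ does not make the subtracted term finite, because conditioning can increase mutual information. For example, take $X$ and a data-independent shift both uniform on $[0,1]$, let $\phi_l$ add the shift, and set $Z=X+\text{shift}$; then $I(X;\phi_l)=0$ and $I(X;Z)<\infty$, yet $I(X;\phi_l\mid Z)=I(X;Z\mid\phi_l)=\infty$. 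So the correct proviso is $I(X^{m};\phi_l\mid Z^{m})<\infty$, which does hold for discrete $\phi_l$ with $H(\phi_l)<\infty$, since then $I(X^{m};\phi_l\mid Z^{m})\le H(\phi_l)$.
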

All proofs are deferred to Appendix~\ref{app:proofs}.

We interpret the two terms as follows. 
The first term, $I(X^{\sym{m}}; Z^{\sym{m}} \mid \phi_l)$, is the fixed-encoder IB averaged over the algorithm-induced encoder law. It reduces to the classical IB exactly when $\phi_l$ is constant. The second term, $I(X^{\sym{m}}; \phi_l)$, is an encoder-complexity correction. It measures how much the trained encoder varies with the training data, and vanishes whenever the algorithm returns the same $\phi_l$ on every sample. 

\textbf{From encoder complexity to parameter description length.}
We next show that the encoder-complexity correction is bounded by an MDL parameter description length through a chain of elementary inequalities. Suppose $\phi_l$ is realized by a parameter vector $\theta$, and assume $\theta$ takes values in a discrete set; i.e., in practice, the quantization implicit in any finite-precision implementation. Then
\begin{equation}
I(X^{\sym{m}}; \phi_l) \;\le\; H(\phi_l) \;\le\; H(\theta) \;\le\; \mathbb{E}[L(\theta)],
\label{eq:ib-mdl-chain}
\end{equation}
where $L$ is a prefix-free code on the parameter space. The first inequality is the discrete-MI bound, the second is data processing as $\phi_l$ is a function of $\theta$, and the third is Shannon's source coding theorem.

A related encoder-complexity term $I(\phi_l; S)$\new{, in which $S$ is the training set and $\phi_l$ is the encoder learned from it, appears in \citet{kawaguchi2023does},} who add it to the classical IB bound to prove the first rigorous IB-based generalization guarantee. Our $I(X^{\sym{m}}; \phi_l)$ is bounded by $I(\phi_l; S)$, and the parameter description length in \eqref{eq:ib-mdl-chain} in turn bounds $I(\phi_l; S)$. The IB correction needed for learned encoders is bounded by a parameter description length, so a model with parameters admitting a short code in the MDL sense is also compressive in the modern IB sense. The remainder of the paper takes up this parameter-level quantity directly, showing how it can be measured from the training trajectory alone (Section~\ref{section:regret-as-model-complexity}) and how it translates into a generalization bound (Section~\ref{sec:pacmdl}).

\section{Prequential Regret Bounds the Description Length of Neural Networks} \label{section:regret-as-model-complexity}
This section formalizes the intuition that the prequential regret contains the description length of the trained network. We begin with the conceptual intuition, then introduce the two probability spaces needed to make it precise, and state the formal connection in Theorem~\ref{thm:preq}.

\textbf{The conceptual gap.} The intuition starts from how the prequential encoding scheme can be interpreted. Consider the message transmission between Alice and Bob again: the total cost $\Lpreq$ is the number of bits Alice spends, and by the time Bob has decoded the labels he can replay the training procedure and recover the same parameters Alice produced; he ends up holding both data and model. 

Thus, informally, one could argue that the description length of the model is intuitively \new{``contained''} in the prequential code, from which one can decode both the model and the data. 
\new{Splitting} this total into ``bits for the data'' and ``bits for the model'', the data part is well-approximated by $\hat L$, the in-hindsight code length under the trained model, and the model part is what remains, namely the regret $\Lpreq - \hat L$. 
Specifically, this intuition was articulated by~\citet{blier2018description} as motivation for using prequential codes to argue Occam's razor for neural networks, in which they argue that the description length of \new{the} model is `contained' in $\Lpreq$.

However, this conceptual intuition is heuristic, as the prequential code is literally a code on the data, not on the parameters: every bit of $\Lpreq$ is allocated to encoding a label, not to encoding $\theta$. Yet Bob recovers $\theta$ at the end, so the parameter information must be in those bits somewhere, just not in a designated parameter section of the code. Whether the heuristic holds rigorously, and what description-length quantity it formally \new{upper-bounds}, has been left open.

\textbf{Setup.}
Making the conceptual intuition precise requires two probability spaces: 1) the model's predictive distribution $Q_\theta$, which assigns likelihoods to data given parameters, and 2) the joint distribution $T$ over data and trained parameters, induced by the data-generating process and the algorithm $\cA$ together. 

We consider a training set $(X^{\sym{m}}, Y^{\sym{m}})$ drawn from $P$ and write $P(Y^{\sym{m}} \mid X^{\sym{m}})$ for the conditional data-generating distribution. A learning algorithm $\cA$ run on $(X^{\sym{m}}, Y^{\sym{m}})$ produces a parameter $\theta(X^{\sym{m}}, Y^{\sym{m}})$, random under the algorithm's internal randomness $U$; we write $T(\theta \mid X^{\sym{m}}, Y^{\sym{m}})$ for the resulting conditional distribution. Marginalizing over labels gives $T(\theta \mid X^{\sym{m}}) = \sym{\sum_{y^{\sym{m}}}} T(\theta \mid X^{\sym{m}}, \sym{y^{\sym{m}}})\, P(\sym{y^{\sym{m}}} \mid X^{\sym{m}})$, the distribution over trained parameters when the inputs are fixed and the labels (together with the algorithm's randomness) vary\add{; $T(Y^m \mid X^m, \theta)$ denotes the corresponding reverse conditional, the posterior over label sequences given the trained parameter}. We further consider the parametric family $\{Q_\theta\}_{\theta \in \Theta}$ defining the model and write $Q_{\mathrm{preq}}(\cdot \mid X^{\sym{m}})$ for the prequential predictive distribution induced by~$\cA$. 

Notably, $X^{\sym{m}}$ is treated as shared side information in the coding protocol throughout this section, \new{as is common practice} in supervised MDL~\citep{yang2022truly, proencca2020interpretable, yang2024conditional}, in which the goal is to learn $P(Y \mid X)$ rather than $P(X, Y)$. Thus, the cost of encoding $X^{\sym{m}}$ does not contribute to the code length attributable to learning, and we condition on $X^{\sym{m}}$ throughout. We keep this conditioning implicit when convenient. Section~\ref{sec:pacmdl} relaxes this in the PAC-MDL setting.

We next present our main theoretical results that the prequential regret can \new{upper-bound the} parameter description length. 

\begin{theorem}[Prequential regret \new{upper-bounds} parameter description length]
\label{thm:preq}
Let $\theta = \theta(X^{\sym{m}}, Y^{\sym{m}})$ denote the random parameter produced by training, $\Lfinal = -\log Q_{\theta}(Y^{\sym{m}} \mid X^{\sym{m}})$, and $\Lpreq = -\log Q_{\mathrm{preq}}(Y^{\sym{m}} \mid X^{\sym{m}})$. Suppose that, pointwise,
\begin{equation}
  Q_{\theta}(Y^{\sym{m}} \mid X^{\sym{m}}) \;\geq\; T(Y^{\sym{m}} \mid X^{\sym{m}}, \theta).
  \label{eq:preq-assumption}
\end{equation}
Then the following two statements hold.
\begin{enumerate}
\item[(i)] (Pointwise.) For every $(X^{\sym{m}}, Y^{\sym{m}})$,
\begin{equation}
  \Lpreq - \Lfinal
  \;\geq\;
  \underbrace{-\log T(\theta \mid X^{\sym{m}}) + \log T(\theta \mid X^{\sym{m}}, Y^{\sym{m}})}_{\text{Description length of }\theta \,\,\text{for fixed} \,\,X^{\sym{m}},\, Y^{\sym{m}}}
  \;+\; \log \frac{P(Y^{\sym{m}} \mid X^{\sym{m}})}{Q_{\mathrm{preq}}(Y^{\sym{m}} \mid X^{\sym{m}})}.
  \label{eq:preq-pointwise}
\end{equation}
\item[(ii)] (In expectation.) Taking expectations under the joint distribution over $(X^{\sym{m}}, Y^{\sym{m}}, \theta)$,
\begin{equation}
  \mathbb{E}[\Lpreq - \Lfinal]
  \;\geq\;
  \underbrace{H_T(\theta \mid X^{\sym{m}}) - H_T(\theta \mid X^{\sym{m}}, Y^{\sym{m}})}_{\text{Expected description length of }\theta}
  \;+\; \fix{\E_{X^{m}}\bigl[}\mathrm{KL}\bigl(P(\cdot \mid X^{\sym{m}}) \,\|\, Q_{\mathrm{preq}}(\cdot \mid X^{\sym{m}})\bigr)\fix{\bigr]},
  \label{eq:preq-expected}
\end{equation}
where both terms on the right are non-negative.
\end{enumerate}
\end{theorem}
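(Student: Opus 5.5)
The plan is a Bayes-rule identity on the joint law $T$ followed by a single use of assumption \eqref{eq:preq-assumption}. Fix $X^m$ and write every conditional as a density, or mass function, on the discrete set implied by finite precision. Since $T(Y^m \mid X^m) = P(Y^m \mid X^m)$, Bayes' rule under $T$ gives, pointwise,
\begin{equation*}
  T(Y^m \mid X^m, \theta) \;=\; \frac{T(\theta \mid X^m, Y^m)\, P(Y^m \mid X^m)}{T(\theta \mid X^m)} .
\end{equation*}
Taking $-\log$ of \eqref{eq:preq-assumption} yields $\Lfinal = -\log Q_\theta(Y^m\mid X^m) \le -\log T(Y^m \mid X^m,\theta)$. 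Substituting the Bayes identity then gives
\begin{equation*}
  \Lfinal \;\le\; -\log T(\theta\mid X^m,Y^m) - \log P(Y^m\mid X^m) + \log T(\theta\mid X^m).
\end{equation*}

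For (i), subtract this bound from $\Lpreq = -\log Q_{\mathrm{preq}}(Y^m\mid X^m)$. The result is
\begin{equation*}
  \Lpreq-\Lfinal \;\ge\; -\log T(\theta\mid X^m)+\log T(\theta\mid X^m,Y^m)+\log\frac{P(Y^m\mid X^m)}{Q_{\mathrm{preq}}(Y^m\mid X^m)},
\end{equation*}
which is exactly \eqref{eq:preq-pointwise}. No limiting argument is needed here, only positivity of the relevant probabilities on the support of $T$.

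For (ii), take expectations of \eqref{eq:preq-pointwise} under the joint law of $(X^m,Y^m,\theta)$. The first two terms give $H_T(\theta\mid X^m) - H_T(\theta\mid X^m,Y^m)$, which equals the conditional mutual information $I_T(\theta;Y^m\mid X^m) \ge 0$. The last term depends only on $(X^m,Y^m)$, and $Y^m\mid X^m \sim P$, so its expectation is $\E_{X^m}[\KL(P(\cdot\mid X^m)\,\|\,Q_{\mathrm{preq}}(\cdot\mid X^m))]$. This is non-negative by Gibbs' inequality, because $Q_{\mathrm{preq}}(\cdot\mid X^m)$ is a genuine probability distribution over label sequences, being a product of normalized softmax predictives.

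The main obstacle is the bookkeeping of the probability spaces rather than any hard inequality. Three points need to be checked explicitly. First, $T(Y^m\mid X^m)$ is indeed $P(Y^m\mid X^m)$, since $T$ is built from $P$ times the algorithm kernel. Second, the entropies must be interpreted discretely, via the quantization already invoked in Section~\ref{sec:ib}, so that $H_T$ is well defined and non-negative and the Bayes identity holds as a mass-function identity. Third, each expectation must be finite, or else we adopt the convention that the inequality holds in the extended reals. I would state these as standing conventions before the two-line calculation.
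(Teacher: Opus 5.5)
Your proposal is correct and follows essentially the same route as the paper: take logs in \eqref{eq:preq-assumption}, rewrite $T(Y^m \mid X^m, \theta)$ via the two-way factorization of the joint $T(\theta, Y^m \mid X^m)$ using $T(Y^m \mid X^m) = P(Y^m \mid X^m)$, then take expectations to obtain the entropy difference and the expected KL term. Your explicit remarks on discreteness, on $Q_{\mathrm{preq}}$ being a normalized sequential product, and on the non-negativity of $I_T(\theta; Y^m \mid X^m)$ fill in details that the paper leaves implicit.
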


The interpretation of the bound depends on whether training is deterministic or \new{random} given the data, as we describe in detail \new{below}. 

\emph{Deterministic training.} If the algorithm is fully deterministic given $(X^{\sym{m}}, Y^{\sym{m}})$, e.g., with a fixed seed controlling batch order, dropout, etc., then $\theta$ is a function of the training set, $T(\theta \mid X^{\sym{m}}, Y^{\sym{m}})$ is a point mass, and $H_T(\theta \mid X^{\sym{m}}, Y^{\sym{m}}) = 0$. Inequality~\eqref{eq:preq-expected} simplifies to
\begin{equation}
  \mathbb{E}[\Lpreq - \Lfinal]
  \;\geq\;
  H_T(\theta \mid X^{\sym{m}}) + \E_{X^{\sym{m}}}[\KL(P \| Q_{\mathrm{preq}})]
  \;\geq\;
  \mathbb{E}[-\log T(\theta \mid X^{\sym{m}})],
  \label{eq:preq-deterministic}
\end{equation}
i.e., the \fix{expected} model complexity $\fix{\E[}\sym{R}\fix{]} = \fix{\E[}\Lpreq - \Lfinal\fix{]}$ is at least the conditional self-information of the trained parameters under their marginal $T(\theta \mid X^{\sym{m}})$. Thus, this interpretation reveals our key finding that \emph{prequential regret \new{``behaves like''} a parameter code length}, with description length measured against the algorithm's induced distribution over trained parameters at the given inputs.

\emph{Stochastic training and the bits-back interpretation.} When training retains randomness (random seeds, stochastic batching, dropout, etc.), $H_T(\theta \mid X^{\sym{m}}, Y^{\sym{m}}) > 0$ and the residual $H_T(\theta \mid X^{\sym{m}}) - H_T(\theta \mid X^{\sym{m}}, Y^{\sym{m}}) = I_T(Y^{\sym{m}}; \theta \mid X^{\sym{m}})$ takes its place on the right-hand side. 

Notably, $I_T(Y^{\sym{m}}; \theta \mid X^{\sym{m}})$ can be interpreted with the \emph{bits-back} coding argument~\citep{hinton1993keeping, honkela2004variational}: when Alice transmits a stochastic parameter $\theta \sim T(\theta \mid X^{\sym{m}}, Y^{\sym{m}})$, she nominally pays $H_T(\theta \mid X^{\sym{m}})$ bits, but Bob, given $\theta$ and the protocol, can recover the random bits used to sample $\theta$ from $T(\theta \mid X^{\sym{m}}, Y^{\sym{m}})$ and reclaim $H_T(\theta \mid X^{\sym{m}}, Y^{\sym{m}})$ of them. The net description length of $\theta$ under this stochastic protocol is therefore $I_T(Y^{\sym{m}}; \theta \mid X^{\sym{m}})$, exactly the term appearing in~\eqref{eq:preq-expected}. Inequality~\eqref{eq:preq-expected} thus extends the deterministic statement to the stochastic setting. 

\begin{remark}
    The condition $Q_{\theta}(Y^{\sym{m}} | X^{\sym{m}}) \geq T(Y^{\sym{m}} | X^{\sym{m}}, \theta)$ \new{can be interpreted as follows}. \add{The LHS is the likelihood of the training labels under the trained model, which is large in the nearly interpolating regime; the RHS is the posterior probability of these labels among all label sequences that the algorithm could have mapped to $\theta$. The condition holds whenever the latter is the smaller of the two. This is the case in typical many-to-one training, where many label sequences are mapped to the same parameter, so that $\theta$ leaves substantial uncertainty about $Y^m$; it fails \fix{when the model underfits, so that the LHS is small, or} under strong memorization, where $\theta$ all but uniquely identifies $Y^m$ and pushes $T(Y^m \mid X^m, \theta)$ close to one.} We hence refer to this condition as the \emph{interpolation-genericity} condition. Direct verification is unavailable since $T(\theta \mid X^{\sym{m}}, Y^{\sym{m}})$ has no closed form for nontrivial deep learning algorithms, so we test the theorem's operational prediction in Section~\ref{section:exp}, i.e., $\Lpreq - \Lfinal$ tracks generalization.
\end{remark}

In summary, the intuition that the prequential code length contains a description of the trained model, used informally by~\citet{blier2018description}, becomes precise once two probability spaces are distinguished. The predictive distribution $Q_\theta$ governs how the trained model assigns likelihoods to data, while the algorithm-induced distribution $T(\theta \mid X^{\sym{m}})$ governs how the trained parameters themselves vary under the data-generating process and the training algorithm. The heuristic interpretation cannot say which description-length quantity is bounded by the regret, because it does not separate these two spaces. Theorem~\ref{thm:preq} shows that the relevant quantity is the description length of $\theta$ under $T(\theta \mid X^{\sym{m}})$, the algorithm's induced distribution over trained parameters at the given inputs.

\section{From Prequential Regret to Generalization}\label{sec:pacmdl}

The previous section established that the \fix{expected} prequential regret $\fix{\E[}\Lpreq - \hat L\fix{]}$ operationally upper-bounds the description length of the trained parameter under the algorithm's induced distribution $T(\theta \mid X^m)$. We now leverage this description length to prove a generalization guarantee. For deterministic algorithms we base our theoretical results on the PAC-MDL bound of \citet{blum2003pac}, which converts any sub-probability prior over description methods into a high-probability bound on test error; for stochastic algorithms we extend the argument to the transductive PAC-Bayes setting of \citet{begin2014pac}, which accommodates the bits-back argument as we identified in Section~\ref{section:regret-as-model-complexity}. We first recall the PAC-MDL bound in its standard form.

\begin{theorem}[PAC-MDL bound; \citet{blum2003pac}, Theorem 6]
\label{thm:pac-mdl}
Let $D$ be a distribution over $\cX \times \cY$. Draw $m+n$ fixed labeled examples $(x,y)^{m+n} \sim D^{m+n}$ and split them uniformly at random into a training set $S$ of size $m$ and a test set $S'$ of size $n$. A \emph{description method} $\sigma$ is a binary string that, together with the unlabeled data $x^{m+n}$, determines a labeling $y_\sigma$ of all $m+n$ examples; write $\hat{e}(\sigma, S)$ and $\hat{e}(\sigma, S')$ for the error rates of $y_\sigma$ on the training and test sets, respectively. Let $P(\sigma) \geq 0$ be any \emph{prior} over description methods satisfying $\sum_\sigma P(\sigma) \leq 1$. Then with probability at least $1 - \delta$ over the draw of $(x,y)^{m+n}$ and the random partition into $S, S'$:
\[
  \forall \sigma \quad
  n\hat{e}(\sigma, S')
  \;\leq\;
  b_{\max}\!\big(n,\;\hat{e}(\sigma, S),\;P(\sigma)\delta\big),
\]
where $b_{\max}(n, a, \beta)$ is the largest integer $b$ such that $\mathrm{Bucket}(m, n, a, b) \geq \beta$, with $\mathrm{Bucket}(m, n, a, b) = \sum_{t=b}^{a+b} \binom{n}{t}\binom{m}{a+b-t} \big/ \binom{m+n}{a+b}$.
\end{theorem}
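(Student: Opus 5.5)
The plan is to use the standard transductive argument. Condition on the multiset of $m+n$ labeled examples, so that the only randomness left is the uniform split into $S$ and $S'$. Then bound the failure probability for each fixed description method $\sigma$ by $P(\sigma)\delta$, and finish with a union bound over $\sigma$ using $\sum_\sigma P(\sigma)\le 1$. The i.i.d.\ draw from $D^{m+n}$ plays no role beyond this: once we have a bound conditional on every realization of $(x,y)^{m+n}$, averaging over $D^{m+n}$ preserves it.

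\textbf{Step 1 (fixed $\sigma$, fixed pool).} Fix $(x,y)^{m+n}$ and $\sigma$. The labeling $y_\sigma$ depends only on $\sigma$ and the unlabeled pool $x^{m+n}$, so it does not depend on the split. Hence the total number of mistakes $k$ of $y_\sigma$ on the pool is a deterministic quantity. Let $A$ and $B=k-A$ be the numbers of mistakes that land in $S$ and in $S'$; the statement's ``$\hat e$'' is read as the count $A$, respectively $B$. Under a uniformly random split, $B$ is hypergeometric:
\[
\Pr[B=t]=\binom{n}{t}\binom{m}{k-t}\Big/\binom{m+n}{k}.
\]
For an integer $b$ with $a+b=k$, $\mathrm{Bucket}(m,n,a,b)$ is exactly the tail $g(b):=\Pr[B\ge b]$. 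The key point is that $a+b=k$ is fixed, so this tail is a function of $b$ alone, and it is non-increasing in $b$.

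\textbf{Step 2 (the bad event has small probability).} Write $\beta=P(\sigma)\delta$. The bad event is $B>b_{\max}(n,A,\beta)$. Since $A=k-B$, this is the same as $B>\max\{b : g(b)\ge\beta\}$. Because $g$ is non-increasing, this in turn is the same as $g(B)<\beta$. The set $\{t : g(t)<\beta\}$ is an upper interval $\{t\ge t_0\}$, so
\[
\Pr[\text{bad}]=\Pr[B\ge t_0]=g(t_0)<\beta .
\]
If the set is empty, the probability is $0$.

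\textbf{Step 3 (union bound).} Summing over $\sigma$ gives a total failure probability of at most $\sum_\sigma P(\sigma)\delta\le\delta$, conditional on the pool. Taking the expectation over $(x,y)^{m+n}\sim D^{m+n}$ gives the claim.

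\textbf{Main obstacle.} The only delicate point is Step 2. The bound $b_{\max}$ is evaluated at the observed, and therefore random, training error $A$. One must not treat it as a fixed threshold. The fix is to notice that $a+b=k$ is invariant under the split. This lets us rewrite the event purely in terms of $B$ and reduce it to a single hypergeometric tail, using the monotonicity of that tail. Edge cases also need a quick check: $P(\sigma)=0$, where the bound holds trivially, and $b_{\max}$ being defined at all, which is fine since $g(0)=1\ge\beta$.
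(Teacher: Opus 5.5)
Your proof is correct and follows the standard argument of Blum and Langford. The paper does not reprove this result; it only cites it. The argument runs as follows: condition on the pool, note that the total error count of $y_\sigma$ does not depend on the split so the test-error count is hypergeometric, bound each $\sigma$'s failure probability by $P(\sigma)\delta$ using the monotone tail, and finish with a union bound.

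One wording fix in Step~2. The threshold $b_{\max}(n,A,\beta)$ is not literally $\max\{b : g(b)\ge\beta\}$, because its search holds $a=A$ fixed while $a+b$ varies. You only need the inclusion $\{B > b_{\max}(n,A,\beta)\}\subseteq\{\mathrm{Bucket}(m,n,A,B)<\beta\}=\{g(B)<\beta\}$, which follows directly from the definition of a maximum.
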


\begin{remark}
Theorem~\ref{thm:pac-mdl} controls how much the test error $\hat e(\sigma, S')$ can exceed the training error $\hat e(\sigma, S)$ as a function of the description length $-\log P(\sigma)$. Shorter descriptions yield tighter bounds, since $P(\sigma)\delta$ is larger and the right-hand side is smaller.
\end{remark}

The key feature of Theorem~\ref{thm:pac-mdl} is that the prior $P(\sigma)$ is unconstrained beyond being a sub-probability, and can depend on objects fixed in advance of the random partition into $S, S'$, including the unlabeled sample $x^{m+n}$ shared between Alice and Bob. 

However, the algorithm-induced distribution $T(\theta \mid X^m)$ is partition-dependent, since $X^m$ is determined by which subset of $x^{m+n}$ ends up in the training partition. We next show that we resolve this challenge by treating the description method as a pair $(\pi, \theta)$ specifying both the partition and the trained parameter, with the partition encoded uniformly. As a result, the resulting joint prior is partition-invariant by construction.

\subsection{Deterministic algorithms}
We next apply the joint prior to the PAC-MDL \new{bound} and present the generalization bound for deterministic algorithms. 
\begin{theorem}[Generalization bound via prequential regret, deterministic algorithms]
\label{thm:generalization}
Let $\cA$ be a deterministic algorithm, and consider the PAC-MDL setup of Theorem~\ref{thm:pac-mdl}: draw $m+n$ labeled examples $(x, y)^{m+n} \sim P^{\otimes(m+n)}$ and split them uniformly at random into a training set $S$ of size $m$ and a test set $S'$ of size $n$. Let $\theta = \cA(S)$ denote the parameter produced by training on $S$, with induced conditional distribution $T(\theta \mid X^m)$ over $\theta$ given the training inputs. Suppose the interpolation-genericity condition~\eqref{eq:preq-assumption} holds and $\theta$ takes values in a discrete (finite-precision) parameter space. For any $\delta \in (0, 1)$, with probability at least $1 - \delta$ over the joint draw of the sample and the partition,
\begin{equation}
  n \hat e(\theta, S')
  \;\leq\;
  b_{\max}\!\left(n,\; \hat e(\theta, S),\; \frac{\delta}{2 \binom{m+n}{m}} \cdot 2^{-2\E[\Lpreq - \hat L]/\delta}\right),
  \label{eq:generalization-pac-mdl}
\end{equation}
where $L_{\mathrm{preq}}$ and $\hat L$ are the prequential and in-hindsight code lengths of Section~\ref{section:mdl} evaluated on the training partition. $\mathbb{E}[L_{\mathrm{preq}} - \hat L]$ is taken over the joint draw of $(x,y)^{m+n}$ and the random partition.
\end{theorem}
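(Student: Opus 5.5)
The plan is to apply Theorem~\ref{thm:pac-mdl} with a joint prior over pairs $(\pi,\theta)$, and then use Markov's inequality together with Theorem~\ref{thm:preq}(ii) to turn the description length into the prequential regret.

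\textbf{The joint prior.} A description method is a pair $\sigma=(\pi,\theta)$. Here $\pi$ is a partition of the $m+n$ indices into a training block of size $m$ and a test block. The parameter $\theta$ lies in the discrete parameter space and induces the labeling $y_\sigma$ via $\arg\max_y Q_\theta(y\mid x)$ on all $m+n$ inputs. Define
\[
P(\pi,\theta)=\binom{m+n}{m}^{-1}\, T(\theta\mid x^{m+n}_{\pi}),
\]
where $x^{m+n}_{\pi}$ denotes the training inputs selected by $\pi$ and $T(\cdot\mid X^m)$ is the algorithm-induced marginal of Section~\ref{section:regret-as-model-complexity}. For each fixed $\pi$ the factor $T(\cdot\mid x_\pi)$ is a probability distribution over $\theta$, and summing over the $\binom{m+n}{m}$ partitions gives total mass $1$. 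This prior depends only on the unlabeled sample $x^{m+n}$, not on which partition is realized, so Theorem~\ref{thm:pac-mdl} applies. Running it at confidence level $\delta/2$, with probability at least $1-\delta/2$ we get, simultaneously for all $\sigma$,
\[
n\hat e(\sigma,S')\le b_{\max}\bigl(n,\hat e(\sigma,S),P(\sigma)\delta/2\bigr).
\]
We then specialize to $\sigma=(\pi^\ast,\theta)$, where $\pi^\ast$ is the realized partition and $\theta=\cA(S)$. Since the bound holds uniformly over $\sigma$, plugging in this data-dependent choice is legitimate.

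\textbf{Bounding the self-information.} For this $\sigma$ we have
\[
P(\sigma)=\binom{m+n}{m}^{-1}2^{-\ell},\qquad \ell:=-\log T(\theta\mid X^m)\ge 0 .
\]
By the deterministic consequence~\eqref{eq:preq-deterministic} of Theorem~\ref{thm:preq}, which uses the interpolation-genericity condition and the fact that $H_T(\theta\mid X^m,Y^m)=0$ for deterministic $\cA$, we get $\E[\ell]\le \E[\Lpreq-\hat L]$. Markov's inequality then gives
\[
\Pr\bigl(\ell> 2\E[\Lpreq-\hat L]/\delta\bigr)\le \delta/2 .
\]
On the complement event, $P(\sigma)\delta/2\ge \frac{\delta}{2\binom{m+n}{m}}\,2^{-2\E[\Lpreq-\hat L]/\delta}$.

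\textbf{Monotonicity and the union bound.} Next we check that $b_{\max}(n,a,\beta)$ is non-increasing in $\beta$. This holds because it is the largest $b$ with $\mathrm{Bucket}\ge\beta$, and raising the threshold can only shrink the feasible set. Hence a smaller third argument yields a weaker, and therefore still valid, upper bound, and we may substitute the deterministic lower bound on $P(\sigma)\delta/2$. A union bound over the two failure events, each of probability at most $\delta/2$, gives the claimed statement with probability at least $1-\delta$.

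\textbf{Main obstacle.} The expectations must be aligned with the probability space of Theorem~\ref{thm:pac-mdl}. In Theorem~\ref{thm:preq}, the expectation is over $(X^m,Y^m)\sim P$ and $T(\theta\mid X^m)$ marginalizes only over labels. Here the training set is a uniformly random size-$m$ subset of an i.i.d.\ sample of size $m+n$, which is again an i.i.d.\ sample of size $m$, so the two laws coincide and Markov's inequality is applied to the same measure. I would state this exchangeability explicitly. A second point to verify is that the prior truly does not depend on the labels $y^{m+n}$. It does not, because $T(\theta\mid x_\pi)$ integrates the labels out under $P(\cdot\mid x_\pi)$; this is exactly why the marginal, rather than $T(\theta\mid X^m,Y^m)$, appears in the bound.
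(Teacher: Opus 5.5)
Your proposal is correct and follows the paper's proof essentially step for step. It uses the same partition-invariant prior $T(\theta\mid X^m_\pi)/\binom{m+n}{m}$, PAC-MDL at confidence $1-\delta/2$, Markov's inequality on $-\log T(\theta\mid X^m)$ via the deterministic case of Theorem~\ref{thm:preq}, monotonicity of $b_{\max}$ in its third argument, and a union bound; your remark that a uniformly random $m$-subset of an i.i.d.\ sample is again i.i.d., so the Markov expectation is under the same law as in Theorem~\ref{thm:preq}, makes explicit a step the paper uses implicitly.
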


\begin{remark}
The proof (Appendix~\ref{app:proofs}) combines a PAC-MDL bound at confidence $1 - \delta/2$, using the partition-invariant prior $P(\pi, \theta) = T(\theta | X^m_\pi)/\binom{m+n}{m}$, together with a Markov inequality on the description length at confidence $1 - \delta/2$. The key step is the expectation chain $\E[-\log T(\theta | X^m_{\pi^*})] = H_T(\theta | X^m) \leq \E[\Lpreq - \hat L]$, used in the Markov step: the algorithm-dependent factor in the bound is exactly the prequential regret, which can be computed directly from the training set.
\end{remark}

\subsection{Stochastic algorithms}

Theorem~\ref{thm:generalization} assumes $\cA$ is deterministic, so $T(\theta \mid X^m, Y^m)$ is a Dirac mass and the description length $-\log T(\theta^* \mid X^m_{\pi^*})$ of the realized parameter \new{is bounded by} the prequential regret in expectation. For stochastic $\cA$ \new{this bound} no longer holds, as the posterior $T(\theta \mid X^m, Y^m)$ is non-degenerate, and
\[
  \E\bigl[-\log T(\theta^* \mid X^m)\bigr] \;=\; H_T(\theta \mid X^m) \;=\; I_T(\theta; Y^m \mid X^m) + H_T(\theta \mid X^m, Y^m),
\]
which \new{can exceed} $\E[\Lpreq - \hat L]$ by \new{up to} the algorithm's intrinsic conditional entropy $H_T(\theta \mid X^m, Y^m)$. 

Therefore, applying PAC-MDL via Markov as in Theorem~\ref{thm:generalization} yields a bound with irreducible slack: the framework charges $-\log P(\sigma)$ at the realized $\sigma$ pointwise, and does not account for the bits-back term that prequential coding extracts in Section~\ref{section:regret-as-model-complexity}. 

We close the gap by replacing PAC-MDL with its PAC-Bayes counterpart \citep{begin2014pac} in the same transductive setup. With $T(\theta \mid X^m, Y^m)$ playing the role of the data-dependent posterior and $T(\theta \mid X^m)$ the data-independent prior, the complexity term becomes $\KL\bigl(T(\theta \mid X^m, Y^m) \,\|\, T(\theta \mid X^m)\bigr)$, whose expectation is then $I_T(\theta; Y^m \mid X^m)$, i.e., the bits-back-tight quantity. 

The cost is that the bound now controls the Gibbs test error $\E_{\theta \sim T(\cdot \mid X^m, Y^m)}[\hat e(\theta, S \cup S')]$ rather than the error of a single sampled $\theta^*$, which is the natural object for a stochastic predictor and reduces to the point error in the deterministic limit.

\begin{theorem}[Generalization bound via prequential regret, stochastic algorithms]
\label{thm:generalization-stochastic}
Let $\cA$ be a (possibly stochastic) learning algorithm with induced conditional distribution $T(\theta \mid X^m, Y^m)$. Consider the PAC-MDL setup of Theorem~\ref{thm:pac-mdl}: draw $m+n$ labeled examples $(x, y)^{m+n} \sim P^{\otimes(m+n)}$ and split uniformly at random into a training set $S$ of size $m$ and a test set $S'$ of size $n$. Suppose the interpolation-genericity condition~\eqref{eq:preq-assumption} holds and $\theta$ takes values in a discrete (finite-precision) parameter space. For any $\delta \in (0, 1)$, with probability at least $1 - \delta$ over the joint draw of sample and partition,
\begin{equation}
  \mathrm{kl}\!\Bigl(\sym{\hat e(\rho^*, S)} \,\Big\|\, \sym{\hat e(\rho^*, S \cup S')}\Bigr)
  \;\leq\;
  \frac{1}{m}\!\left[\log\binom{m+n}{m} + \frac{2\,\E[\Lpreq - \hat L]}{\delta} + \log\frac{2\,\xi(m, n)}{\delta}\right],
  \label{eq:generalization-stochastic-hp}
\end{equation}
where $\rho^* = T(\cdot \mid X^m, Y^m)$, $\sym{\hat e(\rho^*, D)} = \E_{\theta \sim \rho^*}[\hat e(\theta, D)]$ for $D \in \{S, S \cup S'\}$, and $\xi(m, n)$ is the transductive PAC-Bayes concentration constant of \citet{begin2014pac}.
As in Theorem~\ref{thm:generalization}, $L_{\mathrm{preq}}$ and $\hat L$ are evaluated on the training partition $S$, and the expectation $\mathbb{E}[L_{\mathrm{preq}} - \hat L]$ is taken over the joint draw of $(x,y)^{m+n}$ and the random partition.
\end{theorem}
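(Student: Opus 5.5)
The plan is to mirror the proof of Theorem~\ref{thm:generalization}, replacing the PAC-MDL step with a transductive PAC-Bayes step, and again splitting the failure probability $\delta$ into two halves.

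\textbf{Step 1: a partition-invariant prior.} For the first half of the failure budget, apply the transductive PAC-Bayes theorem of \citet{begin2014pac}. We treat the $m+n$ labeled points as fixed and the uniformly random partition $\pi$ as the only source of randomness. The prior must be chosen before $\pi$ is revealed, so we cannot use $T(\theta \mid X^m)$ directly. Instead we use the joint object $(\pi,\theta)$, exactly as in the deterministic case, with prior
\[
P(\pi,\theta)=T(\theta\mid X^m_\pi)\big/\tbinom{m+n}{m}.
\]
This sums to one and depends only on the unlabeled pool $x^{m+n}$. The posterior is a point mass on the realized partition $\pi^*$, tensored with $\rho^*=T(\cdot\mid X^m_{\pi^*},Y^m_{\pi^*})$. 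Then
\[
\KL(\text{posterior}\,\|\,\text{prior})=\log\tbinom{m+n}{m}+\KL\bigl(\rho^*\,\|\,T(\cdot\mid X^m_{\pi^*})\bigr).
\]
Plugging this into the \citet{begin2014pac} bound gives, with probability at least $1-\delta/2$,
\[
\mathrm{kl}\bigl(\hat e(\rho^*,S)\,\|\,\hat e(\rho^*,S\cup S')\bigr)\le \frac1m\Bigl[\log\tbinom{m+n}{m}+\KL(\rho^*\|T(\cdot\mid X^m))+\log\tfrac{2\xi(m,n)}{\delta}\Bigr].
\]
We must check that their theorem permits a prior depending on the full unlabeled pool and a posterior ranging over pairs. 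I expect this to be fine, since their bound is uniform over posteriors and the prior is fixed given $(x,y)^{m+n}$. Strictly, the prior may depend on labels too, but ours does not.

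\textbf{Step 2: Markov on the KL term.} For the second half of the budget, apply Markov's inequality to the nonnegative random variable $K=\KL(\rho^*\|T(\cdot\mid X^m))$. This gives $K\le 2\E[K]/\delta$ with probability at least $1-\delta/2$. Averaging the KL over $Y^m$ given $X^m$ yields
\[
\E[K]=I_T(\theta;Y^m\mid X^m)=H_T(\theta\mid X^m)-H_T(\theta\mid X^m,Y^m),
\]
averaged over the random training inputs. Here we use that a uniformly random partition of an i.i.d.\ sample yields a training set distributed as $P^{\otimes m}$, so the joint law of $(X^m,Y^m,\theta)$ matches that of Section~\ref{section:regret-as-model-complexity}. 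Theorem~\ref{thm:preq}(ii), under the interpolation-genericity condition and the discreteness assumption that makes the entropies well defined, gives $I_T\le\E[\Lpreq-\hat L]$ after dropping the nonnegative KL term $\E_{X^m}[\KL(P\|Q_{\mathrm{preq}})]$. A union bound over the two events completes the proof.

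\textbf{Main obstacle.} The delicate step is the identification $\E[K]=I_T$. The issue is that $T(\theta\mid X^m)$ is defined by marginalizing over labels drawn from $P(\cdot\mid X^m)$. In the transductive setup, however, the labels are fixed, and the randomness over which we take the expectation is the joint draw of sample and partition. I would handle this by stating explicitly that the outer expectation in the bound is over this joint draw, which reproduces the i.i.d.\ training law. I would also check that the Markov event and the PAC-Bayes event are both measurable with respect to the same probability space, so that the union bound is legitimate.
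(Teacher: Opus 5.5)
Your proposal is correct and follows essentially the same route as the paper's proof. The steps match one for one: the lifted prior $T(\theta\mid X^m_\pi)/\binom{m+n}{m}$ over pairs $(\pi,\theta)$ with posterior concentrated on the realized partition; the transductive PAC-Bayes bound of B\'egin et al.\ at confidence $1-\delta/2$, with the KL split into $\log\binom{m+n}{m}+\KL(\rho^*\,\|\,T(\cdot\mid X^m))$; Markov's inequality on that KL term via $\E[\KL(\rho^*\,\|\,T(\cdot\mid X^m))]=I_T(\theta;Y^m\mid X^m)\le\E[\Lpreq-\hat L]$ from Theorem~\ref{thm:preq}(ii); and a union bound. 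The points you flag as delicate are ones the paper uses only implicitly, so stating them makes the argument more complete: the training half of a uniformly split i.i.d.\ pool is distributed as $P^{\otimes m}$, so the expected KL really is the conditional mutual information, and the prior may depend on the pool but not on the split.
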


\begin{remark}
The key step is again an expectation chain, now in mutual-information form: $\E\bigl[\KL\bigl(T(\theta \mid X^m, Y^m) \,\|\, T(\theta \mid X^m)\bigr)\bigr] = I_T(\theta; Y^m \mid X^m) \leq \E[\Lpreq - \hat L]$. This sharpens the deterministic chain $H_T(\theta \mid X^m) \leq \E[\Lpreq - \hat L]$ by exactly the algorithm's intrinsic entropy $H_T(\theta \mid X^m, Y^m)$\new{, i.e., the bits-back term}. Two technical points: \eqref{eq:generalization-stochastic-hp} is stated on $S \cup S'$ following the convention of \citet{begin2014pac}, and is related to test-error-only forms via $\sym{\hat e(\rho^*, S \cup S') = \bigl(m\, \hat e(\rho^*, S) + n\, \hat e(\rho^*, S')\bigr)/(m+n)}$; and the discrete-parameter assumption inherited from Theorem~\ref{thm:generalization} ensures that all KL divergences are well-defined as sums, with the continuous case requiring only $T(\cdot \mid X^m, Y^m) \ll T(\cdot \mid X^m)$ for the relative entropy to be finite.
\end{remark}

\section{Experiments}\label{section:exp}
We empirically test whether the prequential regret $L_{\mathrm{preq}}-\hat L$ tracks the generalization gap on CIFAR-10 and CIFAR-100~\citep{Krizhevsky09learningmultiple}, sweeping over a range of network configurations and recording the regret alongside the held-out generalization error.

\textbf{Setup.}
We compute $L_{\mathrm{preq}}$ following the empirical prequential coding scheme of~\citet{blier2018description}, partitioning the training trajectory into $\sym{K}=40$ blocks of geometrically growing size, and obtain $\hat L$ from the final trained network. We trained 540 networks on CIFAR-10 and 99 networks on CIFAR-100 with ResNet architectures~\citep{he2016identity, he2016deep} of varying depth, sweeping over batch size, weight decay, dropout rate, and seed. An independent data subsample is drawn per configuration to enable comparison with the IB-based baselines~\citep{kawaguchi2023does}; our quantity itself does not require subsampling. Hyperparameters, schedules, and prequential blocking details are deferred to Appendix~\ref{appendix:exp}. Our code is available at \fix{\url{https://github.com/minimumdescriptionlength/mdl_gen}}.

\add{\textbf{Computing the regret.} Let $0 = m_0 < m_1 < \dots < m_K = m$ be the block boundaries, so that block $B_{t+1}$ consists of examples $m_t+1, \dots, m_{t+1}$. We next describe the procedure of computing the regret, in the following four steps. (1) The first block is encoded with the uniform distribution over classes, at $m_1 \log |\cY|$ bits. (2) For $t = 1, \dots, K-1$, $\hat{\theta}_t$ is trained on the first $m_t$ examples, warm-started from $\hat{\theta}_{t-1}$, and the cross-entropy of block $B_{t+1}$ under $Q_{\hat{\theta}_t}$ is added to $\Lpreq$. (3) $\hat L$ is the cross-entropy of the final model $\theta^*$, trained on all $m$ examples, on the \emph{training} set. (4) $R = \Lpreq - \hat L$. }

\begin{table}[ht]
\centering
\caption{Correlations between generalization-bound quantities and the empirical generalization gap on CIFAR-10, measured both in loss and in classification error. Notation follows the previous work \new{on IB generalization bounds}~\citep{kawaguchi2023does}. We write $S$ for the training sample, $D$ for the network depth, $\check I$ for a finite-sample mutual-information estimator, $Z^s_l$ for the activations at layer $l$ on data subsample $s$, $\theta^S_l$ for the parameters at layer $l$ trained on $S$, and $\theta^S_{D+1}$ for the full network parameter trained on $S$. The baseline $\check I(S; \theta^S_{D+1})$ is the parameter-information quantity $I(S; W)$ of the information-theoretic line~\citep{russo2016controlling, xu2017information} and is the only pure model-complexity baseline besides ours. Best results are bolded.}
\resizebox{0.85\textwidth}{!}{%
\begin{tabular}{lcccccc}
\toprule
& \multicolumn{2}{c}{Spearman corr.}
& \multicolumn{2}{c}{Pearson corr.}
& \multicolumn{2}{c}{Kendall corr.} \\
\cmidrule(lr){2-3} \cmidrule(lr){4-5} \cmidrule(lr){6-7}
Generalization gap:
& Loss & Error & Loss & Error & Loss & Error \\
\midrule
$\frac{1}{D}\sum_{l=1}^{D} \check{I}(X; Z_l^s)$
& 0.8481 & 0.7410 & 0.2116 & 0.1831 & 0.6425 & 0.5436 \\
$\min_{l\in[D]} \check{I}(X; Z_l^s)$
& 0.7145 & 0.5602 & 0.7203 & 0.5719 & 0.4461 & 0.3404 \\
$\frac{1}{D}\sum_{l=1}^{D} \check{I}(X; Z_l^s \mid Y)$
& 0.8481 & 0.7406 & 0.2140 & 0.1853 & 0.6427 & 0.5435 \\
$\min_{l\in[D]} \check{I}(X; Z_l^s \mid Y)$
& 0.7004 & 0.5434 & 0.7062 & 0.5560 & 0.4386 & 0.3305 \\
$\check{I}(S; \theta_{D+1}^{S})$
& 0.4688 & 0.3112 & 0.2512 & 0.0775 & 0.2121 & 0.1208 \\
$\min_{l\in[D]} \check{I}(S; \theta_l^{S}) + \check{I}(X; Z_l^s \mid Y)$
& 0.8434 & 0.7313 & 0.8437 & 0.7195 & 0.6270 & 0.5332 \\
\midrule
$\Lpreq - \hat L$ \textbf{(Ours)} & \textbf{0.8784} & \textbf{0.8846}
& \textbf{0.9211} & \textbf{0.9172}
& \textbf{0.6800} & \textbf{0.6874} \\
\bottomrule
\end{tabular}
}
\label{tab:cifar10_corr}
\end{table}

\textbf{Results on CIFAR-10.} Our hyperparameter and seed grid matches that of the IB-based baselines~\citep{kawaguchi2023does} on CIFAR-10, so we take their reported numbers directly into Table~\ref{tab:cifar10_corr}. In their pipeline, the mutual information terms $\check I(X;Z^s_l)$ and $\check I(X;Z^s_l\mid Y)$ are estimated by adaptive kernel density estimation~\citep{chelombiev2019adaptive} on layer-$l$ activations, while the parameter-information term $\check I(S;\theta^S_l)$ is estimated by fitting a \fix{SWAG~\citep{maddox2019simple}} posterior over parameters across data subsamples and applying a Jensen upper bound. The `Loss' and `Error' columns report the correlation between each complexity quantity and the generalization gap measured in loss and in classification error, respectively.

The prequential regret outperforms all baselines across all six rank and linear correlation columns. The most informative comparison is with $\check I(S;\theta^S_{D+1})$, the only other pure model-complexity quantity in the table (no representation-level term). The gap there is substantial: Pearson $0.92$ versus \new{$0.25$} on the loss gap, and $0.92$ versus \new{$0.08$} on the error gap. We attribute the gap to the estimation pipeline: $\check I(S;\theta^S_{D+1})$ requires fitting a high-dimensional Gaussian posterior over parameters and is sensitive to the SWAG approximation, whereas the prequential regret is read directly off the training trajectory and avoids high-dimensional density estimation altogether.

\begin{figure}[ht]
    \centering
    \includegraphics[width=.8\textwidth]{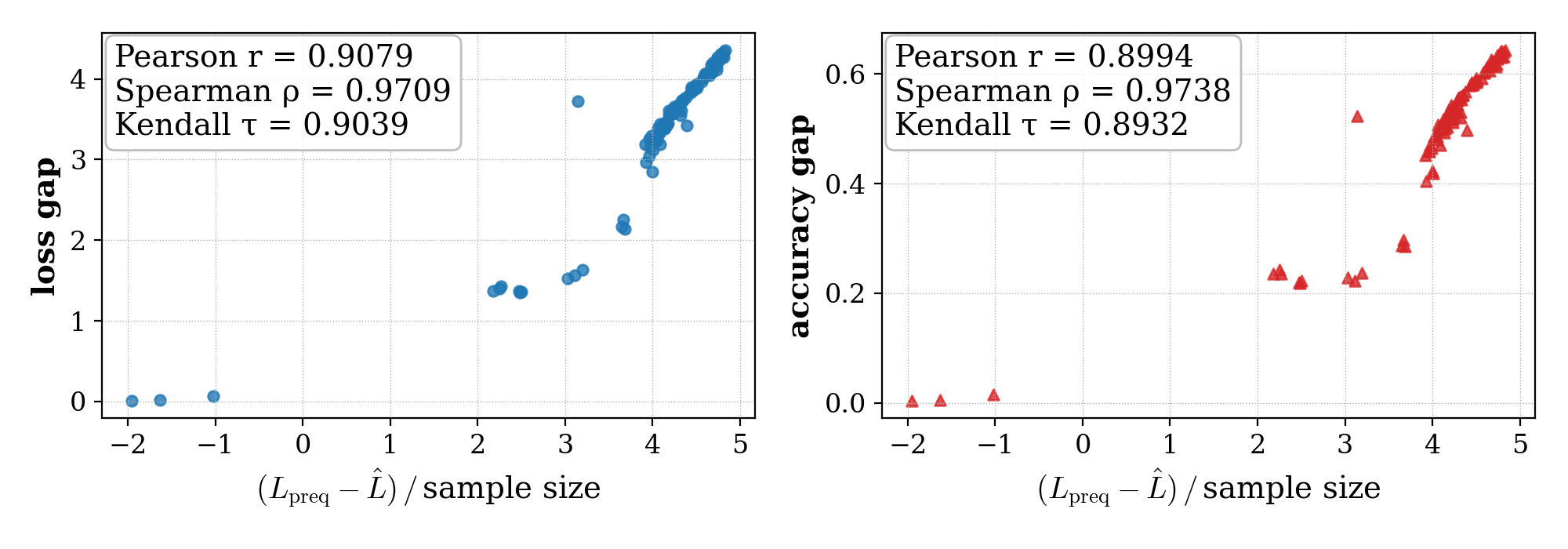}
    \caption{Correlation between the prequential regret and the generalization gaps on CIFAR-100.}
    \label{fig:lmodel_vs_gaps}
\end{figure}

\textbf{Results on CIFAR-100.} Figure~\ref{fig:lmodel_vs_gaps} plots the empirical prequential regret against the loss and accuracy gaps, and shows similarly strong correlations. The three outliers in the bottom-left quadrant have negative empirical regret and therefore cannot upper-bound any description length. They correspond to severely underfit configurations (training accuracy below $0.25$) where small networks are paired with overly aggressive weight decay, which violates the interpolation-genericity condition~\eqref{eq:preq-assumption} and places these cases outside the regime where Theorems~\ref{thm:generalization} and~\ref{thm:generalization-stochastic} apply. On the remaining configurations, which cover the regime of practical interest, the bound is well-defined and tracks the gap closely.

Due to the higher per-configuration cost on CIFAR-100, we restrict this evaluation to 99 network configurations. We do not report the IB baselines of~\citet{kawaguchi2023does} on CIFAR-100, as their post-training estimation did not finish within our 48-hour compute budget.

\section{Conclusion}\label{section:conclusion}
We approached the question of why over-parameterized neural networks generalize through the lens of MDL, and proposed the prequential regret $L_{\mathrm{preq}}-\hat L$ as an operational measure of model complexity computable from the training trajectory alone. The \fix{expected} regret upper-bounds the description length of the trained parameter under the algorithm-induced distribution, yields a generalization bound in the PAC-MDL setting, and tracks the empirical generalization gap on CIFAR-10 and CIFAR-100 with Pearson correlations exceeding $0.9$. Along the way, we connected parameter compression in the MDL sense to the algorithm-level information bottleneck, \new{showing that} the encoder-complexity correction recently added to IB \new{is upper-bounded by} an MDL parameter description length.

\textbf{Limitations and future work.} Our generalization bound holds in the transductive PAC-MDL setting, where the test set is finite and fixed. Extending it to a population-risk statement is a concrete open problem. 
Further, the prequential regret itself is architecture- and modality-agnostic, and tracking how it behaves on transformers and on language or multimodal data is an interesting future direction, particularly in regimes where the \new{nearly interpolating} assumption is stressed, such as large-scale pretraining.

\bibliography{reference.bib}

%%%%%%%%%%%%%%%%%%%%%%%%%%%%%%%%%%%%%%%%%%%%%%%%%%%%%%%%%%%%

\appendix

% \section{Technical appendices and supplementary material}
\section{Proofs}\label{app:proofs}

\subsection{Proof of Proposition~\ref{prop:ib-decomposition}} \label{app:proof:prop1}
\begin{proof}
Apply the chain rule to $(X^{\sym{m}}; Z^{\sym{m}}, \phi_l)$ in two ways:
\[
I(X^{\sym{m}}; Z^{\sym{m}}, \phi_l) \;=\; I(X^{\sym{m}}; \phi_l) + I(X^{\sym{m}}; Z^{\sym{m}} \mid \phi_l) \;=\; I(X^{\sym{m}}; Z^{\sym{m}}) + I(X^{\sym{m}}; \phi_l \mid Z^{\sym{m}}).
\]
Equating and rearranging gives \eqref{eq:ib-identity}; dropping $I(X^{\sym{m}}; \phi_l \mid Z^{\sym{m}}) \ge 0$ gives \eqref{eq:ib-inequality}.
\end{proof}

\subsection{Proof of Theorem~\ref{thm:preq}} \label{app:proof_thm_preq}
\begin{proof}
\emph{(i) Pointwise.} Taking $\log$ on both sides of~\eqref{eq:preq-assumption} and subtracting $\log Q_{\mathrm{preq}}(Y^{\sym{m}} \mid X^{\sym{m}})$,
\begin{equation}
  \log \frac{Q_{\theta}(Y^{\sym{m}} \mid X^{\sym{m}})}{Q_{\mathrm{preq}}(Y^{\sym{m}} \mid X^{\sym{m}})}
  \;\geq\;
  \log \frac{T(Y^{\sym{m}} \mid X^{\sym{m}}, \theta)}{Q_{\mathrm{preq}}(Y^{\sym{m}} \mid X^{\sym{m}})}.
  \label{eq:preq-step1}
\end{equation}
The LHS equals $\Lpreq - \Lfinal$. For the RHS, multiply and divide by $P(Y^{\sym{m}} \mid X^{\sym{m}})$:
\[
  \log \frac{T(Y^{\sym{m}} \mid X^{\sym{m}}, \theta)}{Q_{\mathrm{preq}}(Y^{\sym{m}} \mid X^{\sym{m}})}
  \;=\;
  \log \frac{T(Y^{\sym{m}} \mid X^{\sym{m}}, \theta)}{P(Y^{\sym{m}} \mid X^{\sym{m}})}
  \;+\;
  \log \frac{P(Y^{\sym{m}} \mid X^{\sym{m}})}{Q_{\mathrm{preq}}(Y^{\sym{m}} \mid X^{\sym{m}})}.
\]
% By Bayes' rule applied to the joint $T(Y^n, \theta \mid X^n) = P(Y^n \mid X^n) T(\theta \mid X^n, Y^n) = T(\theta \mid X^n) T(Y^n \mid X^n, \theta)$,
Since the algorithm-induced joint $T(\theta, Y^{\sym{m}} \mid X^{\sym{m}})$ factors two ways,
\[
  T(\theta \mid X^{\sym{m}})\, T(Y^{\sym{m}} \mid X^{\sym{m}}, \theta)
  \;=\;
  P(Y^{\sym{m}} \mid X^{\sym{m}})\, T(\theta \mid X^{\sym{m}}, Y^{\sym{m}}),
\]
so
\[
  \log \frac{T(Y^{\sym{m}} \mid X^{\sym{m}}, \theta)}{P(Y^{\sym{m}} \mid X^{\sym{m}})}
  \;=\;
  \log T(\theta \mid X^{\sym{m}}, Y^{\sym{m}}) - \log T(\theta \mid X^{\sym{m}}).
\]
Substituting back into~\eqref{eq:preq-step1} yields~\eqref{eq:preq-pointwise}.

\emph{(ii) In expectation.} Taking expectations of~\eqref{eq:preq-pointwise} over the joint distribution and using $\mathbb{E}[-\log T(\theta \mid X^{\sym{m}})] = H_T(\theta \mid X^{\sym{m}})$, $\mathbb{E}[-\log T(\theta \mid X^{\sym{m}}, Y^{\sym{m}})] = H_T(\theta \mid X^{\sym{m}}, Y^{\sym{m}})$, and $\mathbb{E}[\log P(Y^{\sym{m}} \mid X^{\sym{m}})/Q_{\mathrm{preq}}(Y^{\sym{m}} \mid X^{\sym{m}})] = \E_{X^{\sym{m}}}[\KL(P(\cdot \mid X^{\sym{m}}) \| Q_{\mathrm{preq}}(\cdot \mid X^{\sym{m}}))] \geq 0$, we obtain~\eqref{eq:preq-expected}.
\end{proof}

\subsection{Proof of Theorem~\ref{thm:generalization}} \label{app:prrof:thm_generalization_pac_mdl}

The proof has two steps: a PAC-MDL bound at confidence $1 - \delta/2$ with a partition-invariant prior, and a Markov bound at confidence $1 - \delta/2$ converting the expected-description-length result of Theorem~\ref{thm:preq} into a high-probability lower bound on $T(\theta \mid X^m)$. A union bound combines them.

\emph{Step 1: A partition-invariant prior and PAC-MDL.} A description method $\sigma = (\pi, \theta)$ specifies a training partition $\pi \subset [m+n]$ with $|\pi| = m$ together with a parameter $\theta$; given $x^{m+n}$, the labeling is determined by $y_\sigma(x_i) = \theta(x_i)$. Define
\[
  P(\pi, \theta) \;=\; \frac{T(\theta \mid X^m_\pi)}{\binom{m+n}{m}},
\]
where $X^m_\pi$ are the inputs at indices in $\pi$. The prior is determined by $\cA$ and $x^{m+n}$, both fixed in advance of the random partition, and sums to one since $\sum_\theta T(\theta \mid X^m_\pi) = 1$ for each $\pi$ and there are $\binom{m+n}{m}$ partitions. Applying Theorem~\ref{thm:pac-mdl} with confidence $1 - \delta/2$ gives that, with probability at least $1 - \delta/2$ over the joint draw, the realized $\sigma^* = (\pi^*, \theta^*)$ with $\theta^* = \cA(S)$ satisfies
\begin{equation}
  n \hat e(\theta^*, S') \;\leq\; b_{\max}\!\left(n,\, \hat e(\theta^*, S),\, \frac{T(\theta^* \mid X^m_{\pi^*})}{\binom{m+n}{m}} \cdot \frac{\delta}{2}\right),
  \label{eq:pac-mdl-applied}
\end{equation}
where we used that the labeling produced by $\sigma^*$ on $S$ and $S'$ coincides with that of $\theta^*$.

\emph{Step 2: Markov bound on the description length.} The random variable $-\log T(\theta \mid X^m)$ is non-negative, and Theorem~\ref{thm:preq}(ii) under the deterministic assumption ($H_T(\theta \mid X^m, Y^m) = 0$) gives $\E[-\log T(\theta \mid X^m)] = H_T(\theta \mid X^m) \leq \E[\Lpreq - \hat L]$. By Markov's inequality, with probability at least $1 - \delta/2$,
\begin{equation}
  -\log T(\theta \mid X^m) \;\leq\; \frac{2\, \E[\Lpreq - \hat L]}{\delta},
  \quad \text{i.e.,} \quad
  T(\theta \mid X^m) \;\geq\; 2^{-2\E[\Lpreq - \hat L]/\delta}.
  \label{eq:markov-T}
\end{equation}

\emph{Step 3: Union bound.} The events in \eqref{eq:pac-mdl-applied} and \eqref{eq:markov-T} each hold with probability at least $1 - \delta/2$ over the joint draw of sample and partition. By the union bound, both hold simultaneously with probability at least $1 - \delta$. On this event, since $b_{\max}(n, a, \cdot)$ is non-increasing in its third argument~\citep{blum2003pac}, plugging the lower bound on $T(\theta^* \mid X^m_{\pi^*})$ from \eqref{eq:markov-T} into \eqref{eq:pac-mdl-applied} gives
\[
  n \hat e(\theta^*, S')
  \;\leq\;
  b_{\max}\!\left(n,\, \hat e(\theta^*, S),\, \frac{\delta}{2 \binom{m+n}{m}} \cdot 2^{-2\E[\Lpreq - \hat L]/\delta}\right).
  \qed
\]

\subsection{Proof of Theorem~\ref{thm:generalization-stochastic}} \label{app:prrof:thm_generalization_pac_bayes}

The proof parallels that of Theorem~\ref{thm:generalization}, with transductive PAC-Bayes replacing PAC-MDL and Markov applied to a KL divergence rather than to a description length.

\emph{Step 1: Lifted prior and posterior; transductive PAC-Bayes.} Lift the hypothesis to a pair $(\pi, \theta)$ as in the proof of Theorem~\ref{thm:generalization}, and define
\[
  \pi_0(\pi, \theta) \;=\; \frac{T(\theta \mid X^m_\pi)}{\binom{m+n}{m}},
  \qquad
  \rho(\pi, \theta) \;=\; \indicator[\pi = \pi^*]\, T(\theta \mid X^m_{\pi^*}, Y^m_{\pi^*}),
\]
where $\pi^*$ is the realized partition. The prior $\pi_0$ is determined by $\cA$ and $x^{m+n}$, both fixed in advance of the partition, and is a probability distribution by the same calculation as in Theorem~\ref{thm:generalization}. Applying the transductive PAC-Bayes inequality (\citealp{begin2014pac}, Theorem 1) on the lifted hypothesis space at confidence $1 - \delta/2$ gives, with probability at least $1 - \delta/2$ over the joint draw of sample and partition,
\begin{equation}
  \mathrm{kl}\!\bigl(\sym{\hat e(\rho, S)} \,\|\, \sym{\hat e(\rho, S \cup S')}\bigr)
  \;\leq\;
  \frac{\KL(\rho \,\|\, \pi_0) + \log(2\xi(m, n)/\delta)}{m}.
  \label{eq:pac-bayes-applied}
\end{equation}
Since $\rho$ concentrates on $\pi = \pi^*$, the lifted Gibbs errors reduce to $\sym{\hat e(\rho, D) = \hat e(\rho^*, D)}$ for $\sym{D} \in \{S, S \cup S'\}$, and the KL decomposes as
\[
  \KL(\rho \,\|\, \pi_0)
  \;=\; \sum_\theta T(\theta \mid X^m_{\pi^*}, Y^m_{\pi^*}) \log \frac{T(\theta \mid X^m_{\pi^*}, Y^m_{\pi^*})}{T(\theta \mid X^m_{\pi^*}) / \binom{m+n}{m}}
\]
\[
\;=\; \log\!\binom{m+n}{m} + \KL\!\bigl(T(\theta \mid X^m_{\pi^*}, Y^m_{\pi^*}) \,\|\, T(\theta \mid X^m_{\pi^*})\bigr).
\]
\emph{Step 2: Markov on the KL.} The random variable $\KL\bigl(T(\theta \mid X^m, Y^m) \,\|\, T(\theta \mid X^m)\bigr)$ is non-negative, and by the identity $\E_{(B, C)}[\KL(P_{A \mid B, C} \,\|\, P_{A \mid C})] = I(A; B \mid C)$ applied with $A = \theta$, $B = Y^m$, $C = X^m$,
\[
  \E\bigl[\KL\bigl(T(\theta \mid X^m, Y^m) \,\|\, T(\theta \mid X^m)\bigr)\bigr] \;=\; I_T(\theta; Y^m \mid X^m).
\]
The bits-back extension of Theorem~\ref{thm:preq}(ii) gives $I_T(\theta; Y^m \mid X^m) \leq \E[\Lpreq - \hat L]$, so by Markov's inequality, with probability at least $1 - \delta/2$,
\begin{equation}
  \KL\bigl(T(\theta \mid X^m, Y^m) \,\|\, T(\theta \mid X^m)\bigr) \;\leq\; \frac{2\,\E[\Lpreq - \hat L]}{\delta}.
  \label{eq:markov-KL}
\end{equation}

\emph{Step 3: Union bound.} The events in \eqref{eq:pac-bayes-applied} and \eqref{eq:markov-KL} each hold with probability at least $1 - \delta/2$, so both hold simultaneously with probability at least $1 - \delta$. On this event, substituting \eqref{eq:markov-KL} into \eqref{eq:pac-bayes-applied} via the KL decomposition yields \eqref{eq:generalization-stochastic-hp}. \qed

\section{Additional Experiment Details}\label{appendix:exp}

\subsection{CIFAR-10 Experiment Details}\label{app:cifar10_experiments_details}
We replicate the 30\%-subsample CIFAR-10 protocol of \citet{kawaguchi2023does}. For each (seed, instance) pair we draw a 15{,}000-image subset of the CIFAR-10 training split using the original RNG order from their released codebase \new{(}a sequential \texttt{np.random.choice} over five instances after seeding\new{),} so our subsamples match theirs bit-for-bit. Inputs are normalized with the standard CIFAR-10 statistics; training uses random crops (\texttt{padding}$=4$) and horizontal flips, while bit-coded evaluation passes use no augmentation. We use the three architectures from the original work, PreResNet-$\{56, 83, 110\}$, imported directly as a submodule of their released code.

Each block is trained with SGD (momentum $=0.9$, initial learning rate $=0.1$, mean-reduction cross-entropy, no mixed precision, no label smoothing). The learning rate follows the piecewise schedule of \citet{kawaguchi2023does} with $\mathrm{swa\_start} = 0.805 \times E$ epochs: constant at the initial rate for the first half of pre-SWA training, linearly decayed to $\mathrm{swa\_lr} = 0.05$ over the next $40\%$, and constant at $\mathrm{swa\_lr}$ thereafter. We sweep weight decay $\in \{10^{-5}, 10^{-4}, 10^{-3}\}$ and batch size $\in \{64, 128, 1024\}$, identical to their Table 2 grid.

\textbf{Prequential MDL layer.} On top of this recipe we run the prequential coding procedure: each 15k-sample instance is partitioned into $\fix{K} \le 40$ blocks whose cumulative prefix sizes grow geometrically from $\max(\lceil 0.005\,\sym{m}\rceil, 100)$ to $\sym{m}$. \fix{The first block $B_1$} is coded with the uniform class prior ($\log_2 10$ bits/sample), and for each subsequent block we train on the prefix and evaluate cross-entropy on the held-out block. Per-block epoch budgets are tiered by prefix fraction $r$: $40/80/120/200$ epochs for $r < 0.05 / < 0.20 / < 0.50 / \ge 0.50$, so that the final block matches the original 200-epoch straight-training budget. We use warm-start (the model is carried across blocks) and report $\sym{R/m} = (L_{\mathrm{preq}} - \hat L)/\sym{m}$ as the predictor; targets are $\mathrm{GL} = \mathrm{test\_loss} - \mathrm{train\_loss}$ \fix{(in bits)} and $\mathrm{GE} = \mathrm{train\_acc} - \mathrm{test\_acc}$.

\textbf{Setup grid.} The full grid is $3\,(\mathrm{arch}) \times 3\,(\mathrm{wd}) \times 3\,(\mathrm{bs}) \times 4\,(\mathrm{seed}) \times 5\,(\mathrm{instance}) = 540$ cells, matching the CIFAR-10 grid of \citet{kawaguchi2023does}. Following their filter, correlations are reported on cells with training accuracy above $0.85$.

\subsection{CIFAR-100 Experiment Details}\label{app:cifar100_experiments_details}
We use the standard CIFAR-100 training split ($\sym{m} = 50{,}000$, no subsampling) and the held-out test split ($\sym{n} = 10{,}000$). Inputs are normalized with the standard CIFAR-100 statistics ($\mathrm{mean} = (0.5071, 0.4865, 0.4409)$, $\mathrm{std} = (0.2673, 0.2564, 0.2762)$); training uses random crops (\texttt{padding}$=4$) and horizontal flips, while bit-coded evaluation passes use no augmentation. Architectures are the torchvision ResNet family at five depths, ResNet-$\{18, 34, 50, 101, 152\}$, sweeping model size as a clean log-of-parameter-count axis without changing the residual block design.

Each block is trained with SGD (Nesterov momentum $=0.9$, initial learning rate $=0.1$, label smoothing $=0.1$, batch size $=256$, mean-reduction cross-entropy, no mixed precision). The learning-rate schedule is per-batch warmup-then-cosine: a linear warmup over the first $5\%$ of the block's optimizer steps starting at $0.1\times$ the initial rate, followed by cosine annealing to $\eta_{\min} = 10^{-4}$ over the remaining steps; the schedule is rebuilt at each block over its own step budget. We run three independent repeats per cell, with warm-start across blocks (model and optimizer state are carried over from the previous prefix; no cold restarts).

\textbf{Prequential MDL layer.} We run the same prequential coding procedure as on CIFAR-10: the full 50k-sample training set is partitioned into $\fix{K} \le 40$ blocks whose cumulative prefix sizes grow geometrically from $\max(\lceil 0.005\,\sym{m}\rceil, 100) = 250$ to $\sym{m} = 50{,}000$. \fix{The first block $B_1$} is coded with the uniform class prior ($\log_2 100 \approx 6.644$ bits/sample), and for each subsequent block we train on the prefix and evaluate cross-entropy on the held-out block. Per-block epoch budgets follow the same prefix-fraction tiers as on CIFAR-10 ($40/80/120/200$ epochs for $r < 0.05 / < 0.20 / < 0.50 / \ge 0.50$), so the final block again receives a 200-epoch budget. We report $R/\sym{m} = (L_{\mathrm{preq}} - \hat L)/\sym{m}$ as the predictor; targets are $\mathrm{GL} = \mathrm{test\_loss} - \mathrm{train\_loss}$ in bits/sample (computed post hoc as test bits-per-image minus final-train bits-per-image) and $\mathrm{GE} = \mathrm{train\_acc} - \mathrm{test\_acc}$.

\textbf{Setup grid.} The CIFAR-100 grid is built from three orthogonal axes that share a common anchor (ResNet-34, $\mathrm{wd} = 0$, $\mathrm{dropout} = 0$, run for three repeats):
\begin{itemize}
  \item \textbf{Architecture sweep.} $\mathrm{arch} \in \{\text{ResNet-}18, 50, 101, 152\}$ at $\mathrm{wd} = 0$, $\mathrm{dropout} = 0$; $4 \times 3\,(\text{repeat}) = 12$ cells. Together with the anchor's three ResNet-34 repeats, this covers all five depths.
  \item \textbf{Weight-decay sweep.} ResNet-34 at $\mathrm{dropout} = 0$, $\mathrm{wd} \in \{10^{-6},\,3{\cdot}10^{-6},\,10^{-5},\,3{\cdot}10^{-5},\,10^{-4},\,3{\cdot}10^{-4},\,10^{-3},\,3{\cdot}10^{-3},\,10^{-2}\}$; $9 \times 3 = 27$ cells. Combined with the anchor at $\mathrm{wd} = 0$, this gives 10 weight-decay values in total.
  \item \textbf{Dropout sweep.} ResNet-34 at $\mathrm{wd} = 0$, $\mathrm{dropout}$ on a 19-point uniform grid in $(0, 0.5]$; $19 \times 3 = 57$ cells. Combined with the anchor at $\mathrm{dropout} = 0$, this gives a 20-point uniform grid on $[0, 0.5]$.
\end{itemize}
The three sweeps together with the shared anchor yield $12 + 27 + 57 + 3 = 99$ cells.

\subsection{Computation Resources} \label{app:computation_resources}

All experiments were executed on NVIDIA A100 GPUs via SLURM array jobs on a shared HPC cluster. Each individual run was allocated:
\begin{itemize}
  \item \textbf{GPU}: 1 × NVIDIA A100 (80 GB GPU memory)
  \item \textbf{CPU}: 4 CPU cores
  \item \textbf{Host memory}: 16 GB
  \item \textbf{Wall-clock time limits}: 24 hours per task for weight-decay and dropout sweeps; 48 hours per task for the architecture sweep (ResNet-152 training takes longer).
\end{itemize}

The CIFAR-10 grid (540 cells, including all subsampling instances) was executed similarly: on the same hardware, with 24--48 hour allocations depending on the run complexity. For CIFAR-100, the three sweeps (architecture, weight decay, dropout) were orchestrated as separate array jobs: the architecture sweep ran up to 5 tasks in parallel (\texttt{--array=0-4}), and the weight-decay and dropout sweeps each ran up to 20 tasks in parallel (\texttt{--array=0-19}). Most individual runs completed well within their wall-clock allocations; the 48-hour limit for the architecture sweep served as a safety margin for the largest models (ResNet-152) under variable cluster load. Typical single-run wall time was 8--16 hours for CIFAR-100 and 6--12 hours for CIFAR-10 (30\%-subsample), depending on architecture size and weight-decay value.

\section*{NeurIPS Paper Checklist}
\begin{enumerate}

\item {\bf Claims}
    \item[] Question: Do the main claims made in the abstract and introduction accurately reflect the paper's contributions and scope?
    \item[] Answer: \answerYes{}
    \item[] Justification: The abstract and introduction (Section~1) state three contributions that map directly to the paper's body: connecting MDL parameter compression to the algorithm-level information bottleneck (Section~3), proving that the prequential regret upper-bounds the description length of the trained parameter under the algorithm-induced distribution in the nearly-interpolating regime (Section~4), turning this into a PAC-MDL generalization bound (Section~5), and empirically tracking the generalization gap on CIFAR-10/100 with Pearson correlations exceeding 0.9 (Section~6). Limitations on the regime of validity and on the transductive setting are flagged in Section~7.
    \item[] Guidelines:
    \begin{itemize}
        \item The answer \answerNA{} means that the abstract and introduction do not include the claims made in the paper.
        \item The abstract and/or introduction should clearly state the claims made, including the contributions made in the paper and important assumptions and limitations. A \answerNo{} or \answerNA{} answer to this question will not be perceived well by the reviewers. 
        \item The claims made should match theoretical and experimental results, and reflect how much the results can be expected to generalize to other settings. 
        \item It is fine to include aspirational goals as motivation as long as it is clear that these goals are not attained by the paper. 
    \end{itemize}

\item {\bf Limitations}
    \item[] Question: Does the paper discuss the limitations of the work performed by the authors?
    \item[] Answer: \answerYes{}
    \item[] Justification: Section~7 (``Limitations and future work'') notes that our generalization bound lives in the transductive PAC-MDL setting and that lifting it to a population-risk statement is left open, and that the empirical validation is restricted to ResNets on CIFAR-10/100 in the nearly-interpolating regime. The interpolation-genericity assumption (Equation~7) and the discrete (finite-precision) parameter space requirement are stated explicitly in Theorems~4.1, 5.3, and 5.5, and Remark~4.2 discusses when these conditions hold or fail. Section~6 also reports outliers outside the nearly-interpolating regime where the bound is not expected to apply.
    \item[] Guidelines:
    \begin{itemize}
        \item The answer \answerNA{} means that the paper has no limitation while the answer \answerNo{} means that the paper has limitations, but those are not discussed in the paper. 
        \item The authors are encouraged to create a separate ``Limitations'' section in their paper.
        \item The paper should point out any strong assumptions and how robust the results are to violations of these assumptions (e.g., independence assumptions, noiseless settings, model well-specification, asymptotic approximations only holding locally). The authors should reflect on how these assumptions might be violated in practice and what the implications would be.
        \item The authors should reflect on the scope of the claims made, e.g., if the approach was only tested on a few datasets or with a few runs. In general, empirical results often depend on implicit assumptions, which should be articulated.
        \item The authors should reflect on the factors that influence the performance of the approach. For example, a facial recognition algorithm may perform poorly when image resolution is low or images are taken in low lighting. Or a speech-to-text system might not be used reliably to provide closed captions for online lectures because it fails to handle technical jargon.
        \item The authors should discuss the computational efficiency of the proposed algorithms and how they scale with dataset size.
        \item If applicable, the authors should discuss possible limitations of their approach to address problems of privacy and fairness.
        \item While the authors might fear that complete honesty about limitations might be used by reviewers as grounds for rejection, a worse outcome might be that reviewers discover limitations that aren't acknowledged in the paper. The authors should use their best judgment and recognize that individual actions in favor of transparency play an important role in developing norms that preserve the integrity of the community. Reviewers will be specifically instructed to not penalize honesty concerning limitations.
    \end{itemize}

\item {\bf Theory assumptions and proofs}
    \item[] Question: For each theoretical result, does the paper provide the full set of assumptions and a complete (and correct) proof?
    \item[] Answer: \answerYes{}
    \item[] Justification: All theoretical results (Proposition~3.1, Theorem~4.1, Theorem~5.1, Theorem~5.3, Theorem~5.5) are numbered, cross-referenced, and stated with their assumptions in the main text---e.g., the interpolation-genericity condition in Equation~(7), the discrete/finite-precision parameter space, and the transductive PAC-MDL setup. Complete proofs are given in Appendix~A (Sections~A.1--A.4), with the corresponding intuition and proof sketches provided for the main theorems.
    \item[] Guidelines:
    \begin{itemize}
        \item The answer \answerNA{} means that the paper does not include theoretical results. 
        \item All the theorems, formulas, and proofs in the paper should be numbered and cross-referenced.
        \item All assumptions should be clearly stated or referenced in the statement of any theorems.
        \item The proofs can either appear in the main paper or the supplemental material, but if they appear in the supplemental material, the authors are encouraged to provide a short proof sketch to provide intuition. 
        \item Inversely, any informal proof provided in the core of the paper should be complemented by formal proofs provided in appendix or supplemental material.
        \item Theorems and Lemmas that the proof relies upon should be properly referenced. 
    \end{itemize}

    \item {\bf Experimental result reproducibility}
    \item[] Question: Does the paper fully disclose all the information needed to reproduce the main experimental results of the paper to the extent that it affects the main claims and/or conclusions of the paper (regardless of whether the code and data are provided or not)?
    \item[] Answer: \answerYes{}
    \item[] Justification: Section~6 describes the empirical prequential coding scheme, the partition into $T=40$ blocks of geometrically growing size, the architectures (ResNet family), and the datasets (CIFAR-10/100). Appendix~B gives the full reproduction recipe: subsampling protocol, normalization and augmentation, optimizer (SGD with momentum, learning-rate schedule, weight decay grid, batch sizes, label smoothing), the prequential block schedule and per-block epoch budgets, the warm-start protocol, the full $3\times 3\times 3\times 4\times 5 = 540$-cell CIFAR-10 grid, and the $15+30+60-3 = 99$-cell CIFAR-100 grid (architecture, weight-decay, dropout sweeps). An anonymized code release is also provided (see next item).
    \item[] Guidelines:
    \begin{itemize}
        \item The answer \answerNA{} means that the paper does not include experiments.
        \item If the paper includes experiments, a \answerNo{} answer to this question will not be perceived well by the reviewers: Making the paper reproducible is important, regardless of whether the code and data are provided or not.
        \item If the contribution is a dataset and\slash or model, the authors should describe the steps taken to make their results reproducible or verifiable. 
        \item Depending on the contribution, reproducibility can be accomplished in various ways. For example, if the contribution is a novel architecture, describing the architecture fully might suffice, or if the contribution is a specific model and empirical evaluation, it may be necessary to either make it possible for others to replicate the model with the same dataset, or provide access to the model. In general. releasing code and data is often one good way to accomplish this, but reproducibility can also be provided via detailed instructions for how to replicate the results, access to a hosted model (e.g., in the case of a large language model), releasing of a model checkpoint, or other means that are appropriate to the research performed.
        \item While NeurIPS does not require releasing code, the conference does require all submissions to provide some reasonable avenue for reproducibility, which may depend on the nature of the contribution. For example
        \begin{enumerate}
            \item If the contribution is primarily a new algorithm, the paper should make it clear how to reproduce that algorithm.
            \item If the contribution is primarily a new model architecture, the paper should describe the architecture clearly and fully.
            \item If the contribution is a new model (e.g., a large language model), then there should either be a way to access this model for reproducing the results or a way to reproduce the model (e.g., with an open-source dataset or instructions for how to construct the dataset).
            \item We recognize that reproducibility may be tricky in some cases, in which case authors are welcome to describe the particular way they provide for reproducibility. In the case of closed-source models, it may be that access to the model is limited in some way (e.g., to registered users), but it should be possible for other researchers to have some path to reproducing or verifying the results.
        \end{enumerate}
    \end{itemize}

\item {\bf Open access to data and code}
    \item[] Question: Does the paper provide open access to the data and code, with sufficient instructions to faithfully reproduce the main experimental results, as described in supplemental material?
    \item[] Answer: \answerYes{}
    \item[] Justification: An anonymized code repository is linked in Section~6 (\url{https://anonymous.4open.science/r/mdl_gen-E317}), containing the prequential coding implementation and training scripts. The datasets used (CIFAR-10 and CIFAR-100) are publicly available standard benchmarks. Hyperparameters, schedules, and prequential blocking details are documented in Appendix~B in sufficient detail to reproduce the experimental grids.
    \item[] Guidelines:
    \begin{itemize}
        \item The answer \answerNA{} means that paper does not include experiments requiring code.
        \item Please see the NeurIPS code and data submission guidelines (\url{https://neurips.cc/public/guides/CodeSubmissionPolicy}) for more details.
        \item While we encourage the release of code and data, we understand that this might not be possible, so \answerNo{} is an acceptable answer. Papers cannot be rejected simply for not including code, unless this is central to the contribution (e.g., for a new open-source benchmark).
        \item The instructions should contain the exact command and environment needed to run to reproduce the results. See the NeurIPS code and data submission guidelines (\url{https://neurips.cc/public/guides/CodeSubmissionPolicy}) for more details.
        \item The authors should provide instructions on data access and preparation, including how to access the raw data, preprocessed data, intermediate data, and generated data, etc.
        \item The authors should provide scripts to reproduce all experimental results for the new proposed method and baselines. If only a subset of experiments are reproducible, they should state which ones are omitted from the script and why.
        \item At submission time, to preserve anonymity, the authors should release anonymized versions (if applicable).
        \item Providing as much information as possible in supplemental material (appended to the paper) is recommended, but including URLs to data and code is permitted.
    \end{itemize}

\item {\bf Experimental setting/details}
    \item[] Question: Does the paper specify all the training and test details (e.g., data splits, hyperparameters, how they were chosen, type of optimizer) necessary to understand the results?
    \item[] Answer: \answerYes{}
    \item[] Justification: Section~6 specifies the high-level setup (datasets, architecture families, prequential blocking, sweep over depth, batch size, weight decay, dropout, seed). Appendix~B.1 (CIFAR-10) and B.2 (CIFAR-100) give the data splits, normalization, augmentation, optimizer settings (SGD momentum, learning-rate schedules), weight-decay grids, batch sizes, dropout grids, prequential block schedule, per-block epoch budgets, and the rationale for each grid choice (matching the IB baseline of Kawaguchi et al.~[25] on CIFAR-10).
    \item[] Guidelines:
    \begin{itemize}
        \item The answer \answerNA{} means that the paper does not include experiments.
        \item The experimental setting should be presented in the core of the paper to a level of detail that is necessary to appreciate the results and make sense of them.
        \item The full details can be provided either with the code, in appendix, or as supplemental material.
    \end{itemize}

\item {\bf Experiment statistical significance}
    \item[] Question: Does the paper report error bars suitably and correctly defined or other appropriate information about the statistical significance of the experiments?
    \item[] Answer: \answerNo{}
    \item[] Justification: Our central empirical claim is that the prequential regret tracks the generalization gap across many configurations, and we substantiate it with three rank/linear correlation statistics (Pearson, Spearman, Kendall) computed across the full 540-cell (CIFAR-10) and 99-cell (CIFAR-100) hyperparameter grids, together with scatter plots that display all per-configuration points (Table~1 and Figure~1). We do not report confidence intervals on the correlation values themselves; the main source of variability---across architecture, weight decay, dropout, batch size, and seed---is reflected in the spread of points entering the correlation, not in error bars on a single number.
    \item[] Guidelines:
    \begin{itemize}
        \item The answer \answerNA{} means that the paper does not include experiments.
        \item The authors should answer \answerYes{} if the results are accompanied by error bars, confidence intervals, or statistical significance tests, at least for the experiments that support the main claims of the paper.
        \item The factors of variability that the error bars are capturing should be clearly stated (for example, train/test split, initialization, random drawing of some parameter, or overall run with given experimental conditions).
        \item The method for calculating the error bars should be explained (closed form formula, call to a library function, bootstrap, etc.)
        \item The assumptions made should be given (e.g., Normally distributed errors).
        \item It should be clear whether the error bar is the standard deviation or the standard error of the mean.
        \item It is OK to report 1-sigma error bars, but one should state it. The authors should preferably report a 2-sigma error bar than state that they have a 96\% CI, if the hypothesis of Normality of errors is not verified.
        \item For asymmetric distributions, the authors should be careful not to show in tables or figures symmetric error bars that would yield results that are out of range (e.g., negative error rates).
        \item If error bars are reported in tables or plots, the authors should explain in the text how they were calculated and reference the corresponding figures or tables in the text.
    \end{itemize}

\item {\bf Experiments compute resources}
    \item[] Question: For each experiment, does the paper provide sufficient information on the computer resources (type of compute workers, memory, time of execution) needed to reproduce the experiments?
    \item[] Answer: \answerYes{}
    \item[] Justification: Section~6 reports that 540 networks were trained on CIFAR-10 and 99 on CIFAR-100, and that the CIFAR-100 evaluation was constrained by a 48-hour compute budget---which itself motivated the smaller grid. Appendix~B specifies the full per-cell training cost (200-epoch budget per cell, prequential block schedule, batch sizes, optimizer settings), and also reports the compute resources (GPU, CPU, host memory, and wall-clock time limits). 
    \item[] Guidelines:
    \begin{itemize}
        \item The answer \answerNA{} means that the paper does not include experiments.
        \item The paper should indicate the type of compute workers CPU or GPU, internal cluster, or cloud provider, including relevant memory and storage.
        \item The paper should provide the amount of compute required for each of the individual experimental runs as well as estimate the total compute. 
        \item The paper should disclose whether the full research project required more compute than the experiments reported in the paper (e.g., preliminary or failed experiments that didn't make it into the paper). 
    \end{itemize}
    
\item {\bf Code of ethics}
    \item[] Question: Does the research conducted in the paper conform, in every respect, with the NeurIPS Code of Ethics \url{https://neurips.cc/public/EthicsGuidelines}?
    \item[] Answer: \answerYes{}
    \item[] Justification: The work is theoretical and uses only standard public image-classification benchmarks (CIFAR-10/100); it involves no human subjects, no personally identifiable data, and no model release with foreseeable misuse risk. We have reviewed the NeurIPS Code of Ethics and the research conforms to it; anonymity is preserved in the submission (anonymized code URL).
    \item[] Guidelines:
    \begin{itemize}
        \item The answer \answerNA{} means that the authors have not reviewed the NeurIPS Code of Ethics.
        \item If the authors answer \answerNo, they should explain the special circumstances that require a deviation from the Code of Ethics.
        \item The authors should make sure to preserve anonymity (e.g., if there is a special consideration due to laws or regulations in their jurisdiction).
    \end{itemize}

\item {\bf Broader impacts}
    \item[] Question: Does the paper discuss both potential positive societal impacts and negative societal impacts of the work performed?
    \item[] Answer: \answerNA{}
    \item[] Justification: This is foundational theoretical work on understanding generalization in over-parameterized neural networks via the MDL principle. It does not propose a deployable system, a generative model, or any application tied to a specific user-facing technology, and we are not aware of a direct path from the prequential regret as a generalization diagnostic to a specific positive or negative societal impact beyond the generic effects of progress in learning theory.
    \item[] Guidelines:
    \begin{itemize}
        \item The answer \answerNA{} means that there is no societal impact of the work performed.
        \item If the authors answer \answerNA{} or \answerNo, they should explain why their work has no societal impact or why the paper does not address societal impact.
        \item Examples of negative societal impacts include potential malicious or unintended uses (e.g., disinformation, generating fake profiles, surveillance), fairness considerations (e.g., deployment of technologies that could make decisions that unfairly impact specific groups), privacy considerations, and security considerations.
        \item The conference expects that many papers will be foundational research and not tied to particular applications, let alone deployments. However, if there is a direct path to any negative applications, the authors should point it out. For example, it is legitimate to point out that an improvement in the quality of generative models could be used to generate Deepfakes for disinformation. On the other hand, it is not needed to point out that a generic algorithm for optimizing neural networks could enable people to train models that generate Deepfakes faster.
        \item The authors should consider possible harms that could arise when the technology is being used as intended and functioning correctly, harms that could arise when the technology is being used as intended but gives incorrect results, and harms following from (intentional or unintentional) misuse of the technology.
        \item If there are negative societal impacts, the authors could also discuss possible mitigation strategies (e.g., gated release of models, providing defenses in addition to attacks, mechanisms for monitoring misuse, mechanisms to monitor how a system learns from feedback over time, improving the efficiency and accessibility of ML).
    \end{itemize}
    
\item {\bf Safeguards}
    \item[] Question: Does the paper describe safeguards that have been put in place for responsible release of data or models that have a high risk for misuse (e.g., pre-trained language models, image generators, or scraped datasets)?
    \item[] Answer: \answerNA{}
    \item[] Justification: The paper releases only research code for computing the prequential regret on standard public image-classification benchmarks; no pretrained generative models, language models, or scraped datasets with misuse risk are released.
    \item[] Guidelines:
    \begin{itemize}
        \item The answer \answerNA{} means that the paper poses no such risks.
        \item Released models that have a high risk for misuse or dual-use should be released with necessary safeguards to allow for controlled use of the model, for example by requiring that users adhere to usage guidelines or restrictions to access the model or implementing safety filters. 
        \item Datasets that have been scraped from the Internet could pose safety risks. The authors should describe how they avoided releasing unsafe images.
        \item We recognize that providing effective safeguards is challenging, and many papers do not require this, but we encourage authors to take this into account and make a best faith effort.
    \end{itemize}

\item {\bf Licenses for existing assets}
    \item[] Question: Are the creators or original owners of assets (e.g., code, data, models), used in the paper, properly credited and are the license and terms of use explicitly mentioned and properly respected?
    \item[] Answer: \answerYes{}
    \item[] Justification: All existing assets used are cited: CIFAR-10/100 (Krizhevsky~[26]), the ResNet/PreResNet architectures (He et al.~[21,~22]), the IB baseline pipeline of Kawaguchi et al.~[25] whose released codebase we reuse for the CIFAR-10 subsampling protocol (Appendix~B.1), the SWAG posterior approximation (Maddox et al.~[29]), the prequential coding scheme of Blier and Ollivier~[7], and the PAC-MDL and transductive PAC-Bayes frameworks (Blum and Langford~[8]; B\'egin et al.~[5]). These are standard research assets used under their public terms and within their intended research use.
    \item[] Guidelines:
    \begin{itemize}
        \item The answer \answerNA{} means that the paper does not use existing assets.
        \item The authors should cite the original paper that produced the code package or dataset.
        \item The authors should state which version of the asset is used and, if possible, include a URL.
        \item The name of the license (e.g., CC-BY 4.0) should be included for each asset.
        \item For scraped data from a particular source (e.g., website), the copyright and terms of service of that source should be provided.
        \item If assets are released, the license, copyright information, and terms of use in the package should be provided. For popular datasets, \url{paperswithcode.com/datasets} has curated licenses for some datasets. Their licensing guide can help determine the license of a dataset.
        \item For existing datasets that are re-packaged, both the original license and the license of the derived asset (if it has changed) should be provided.
        \item If this information is not available online, the authors are encouraged to reach out to the asset's creators.
    \end{itemize}

\item {\bf New assets}
    \item[] Question: Are new assets introduced in the paper well documented and is the documentation provided alongside the assets?
    \item[] Answer: \answerYes{}
    \item[] Justification: We release research code implementing the prequential coding procedure and the experimental pipeline at the anonymized URL given in Section~6 (\url{https://anonymous.4open.science/r/mdl_gen-E317}). Documentation of the algorithm and the configurations it reproduces is provided in Section~6 and Appendix~B; no new datasets or pretrained models are released.
    \item[] Guidelines:
    \begin{itemize}
        \item The answer \answerNA{} means that the paper does not release new assets.
        \item Researchers should communicate the details of the dataset\slash code\slash model as part of their submissions via structured templates. This includes details about training, license, limitations, etc. 
        \item The paper should discuss whether and how consent was obtained from people whose asset is used.
        \item At submission time, remember to anonymize your assets (if applicable). You can either create an anonymized URL or include an anonymized zip file.
    \end{itemize}

\item {\bf Crowdsourcing and research with human subjects}
    \item[] Question: For crowdsourcing experiments and research with human subjects, does the paper include the full text of instructions given to participants and screenshots, if applicable, as well as details about compensation (if any)?
    \item[] Answer: \answerNA{}
    \item[] Justification: The paper does not involve crowdsourcing or research with human subjects; all experiments are on standard public image-classification benchmarks.
    \item[] Guidelines:
    \begin{itemize}
        \item The answer \answerNA{} means that the paper does not involve crowdsourcing nor research with human subjects.
        \item Including this information in the supplemental material is fine, but if the main contribution of the paper involves human subjects, then as much detail as possible should be included in the main paper. 
        \item According to the NeurIPS Code of Ethics, workers involved in data collection, curation, or other labor should be paid at least the minimum wage in the country of the data collector. 
    \end{itemize}

\item {\bf Institutional review board (IRB) approvals or equivalent for research with human subjects}
    \item[] Question: Does the paper describe potential risks incurred by study participants, whether such risks were disclosed to the subjects, and whether Institutional Review Board (IRB) approvals (or an equivalent approval/review based on the requirements of your country or institution) were obtained?
    \item[] Answer: \answerNA{}
    \item[] Justification: The paper does not involve research with human subjects, so IRB approval is not applicable.
    \item[] Guidelines:
    \begin{itemize}
        \item The answer \answerNA{} means that the paper does not involve crowdsourcing nor research with human subjects.
        \item Depending on the country in which research is conducted, IRB approval (or equivalent) may be required for any human subjects research. If you obtained IRB approval, you should clearly state this in the paper. 
        \item We recognize that the procedures for this may vary significantly between institutions and locations, and we expect authors to adhere to the NeurIPS Code of Ethics and the guidelines for their institution. 
        \item For initial submissions, do not include any information that would break anonymity (if applicable), such as the institution conducting the review.
    \end{itemize}

\item {\bf Declaration of LLM usage}
    \item[] Question: Does the paper describe the usage of LLMs if it is an important, original, or non-standard component of the core methods in this research? Note that if the LLM is used only for writing, editing, or formatting purposes and does \emph{not} impact the core methodology, scientific rigor, or originality of the research, declaration is not required.
    %this research?
    \item[] Answer: \answerNA{}
    \item[] Justification: LLMs are not part of the core methodology of this paper; the contributions are an MDL-based generalization analysis and image-classification experiments on CIFAR-10/100. Any use of LLMs was limited to writing/editing assistance and does not affect scientific content.
    \item[] Guidelines:
    \begin{itemize}
        \item The answer \answerNA{} means that the core method development in this research does not involve LLMs as any important, original, or non-standard components.
        \item Please refer to our LLM policy in the NeurIPS handbook for what should or should not be described.
    \end{itemize}

\end{enumerate}

\end{document}